\documentclass[10pt,twocolumn]{IEEEtran}

\usepackage[utf8]{inputenc}
\usepackage[T1]{fontenc}
\usepackage{microtype}

\usepackage{amsmath,amssymb,amsthm}
\usepackage{mathtools}
\usepackage{graphicx}
\usepackage{booktabs}
\usepackage{array}

\usepackage{enumitem}
\usepackage{xspace}
\usepackage[table,dvipsnames]{xcolor}
\usepackage{cuted}
\usepackage{capt-of}
\usepackage{tikz}
\usetikzlibrary{calc,spy}
\usepackage{hyperref}
\hypersetup{colorlinks=true, allcolors=Blue, citecolor=Blue, urlcolor=Blue, linkcolor=Blue}
\usepackage[capitalize]{cleveref}
\definecolor{rowhl}{HTML}{E3ECF7}
\definecolor{rowtot}{HTML}{EFEFEF}
\definecolor{oursmint}{HTML}{E2F1EA}
\definecolor{losssalmon}{HTML}{F9E8E4}
\definecolor{figok}{HTML}{1A7A3C}
\definecolor{figbad}{HTML}{C03028}

\newlength{\teaserpanelw}
\newcommand{\panel}[2][]{%
  \begin{tikzpicture}[baseline=(img.north)]
    \node[draw, line width=0.8pt, inner sep=0pt, outer sep=0pt] (img)
      {\includegraphics[width=\dimexpr\teaserpanelw-0.8pt\relax,#1]{#2}};
  \end{tikzpicture}%
}

\newcommand{\firmgrasp}{{{\scshape Firm}Grasp}}
\newcommand{\Cfg}{\mathcal{Q}}                     
\newcommand{\Gset}{\mathcal{G}}                    
\newcommand{\Sset}{\mathcal{C}}                    
\newcommand{\Real}{\mathbb{R}}
\newcommand{\Realnn}{\mathbb{R}_{\ge 0}}
\newcommand{\SEthree}{\mathrm{SE}(3)}

\newcommand{\qpre}[1]{q^{\mathrm{pre}}_{#1}}
\newcommand{\qgrasp}[1]{q^{\mathrm{grasp}}_{#1}}
\newcommand{\lbar}{\bar{\ell}^{*}}
\newcommand{\epsb}{\varepsilon^{(\beta)}}
\newcommand{\Int}{\operatorname{int}}
\newcommand{\conv}{\operatorname{conv}}
\DeclareMathOperator*{\argmin}{arg\,min}

\newtheorem{definition}{Definition}[section]
\newtheorem{lemma}{Lemma}[section]
\newtheorem{proposition}{Proposition}[section]
\newtheorem{theorem}{Theorem}[section]
\newtheorem{corollary}{Corollary}[section]
\newtheorem{assumption}{Assumption}[section]
\newtheorem{problem}{Problem}
\theoremstyle{remark}
\newtheorem{remark}{Remark}[section]

\newlength{\panelw}

\newsavebox{\gdfleftcol}
\newcommand{\cornertag}[2]{%
  \begin{tikzpicture}[inner sep=0pt, outer sep=0pt]
    \node[inner sep=0pt] (cornertagimg) {#1};
    \node[anchor=north west, fill=lightgray!30, draw=black, line width=0.0pt,
          inner sep=1.5pt, minimum width=1.1em, minimum height=0.9em,
          font=\small, align=left] at (cornertagimg.north west) {#2};
  \end{tikzpicture}%
}

\title{\Large\textbf{Grasp Execution Without a Planner}:\\
Configuration-Space Grasp Distance Fields with Certified Safety \& Guaranteed Quality}

\author{Clinton Enwerem\(^{1}\), John S. Baras\(^{1}\), and Calin Belta\(^{1}\)%
\thanks{\(^{1}\)The authors are with the Department of Electrical and Computer Engineering and the Institute for Systems Research (ISR), University of Maryland, College Park, MD, USA. Emails:
{\ttfamily\small \char`\{enwerem, baras, calin\char`\}@umd.edu}.}%
}

\begin{document}
\maketitle
\begin{abstract}
Mainstream plan-then-track approaches to multifingered grasp execution entail selecting a grasp, planning a collision-free trajectory, and tracking the resulting trajectory via a feedback controller. Pose-estimation error during execution or scene motion can invalidate this open-loop commitment and trigger replanning. We thus present Grasp Distance Fields (GDFs), smooth softmin distances to finite sets of arm-hand grasp configurations. Using their negative gradients as feedback, we jointly select and execute grasps without planning a trajectory. A CBF-CLF quadratic program (QP) enforces self-collision, workspace, object, and obstacle-clearance constraints, while its CLF slack quantifies obstruction of task progress. We bound the softmin approximation error by $\log N/\rho$ and prove forward invariance of the filtered safe set. To handle changes in contact topology, we combine a hysteretic contact-mode transition with a wrench-quality CBF that limits degradation of the realized force-closure margin relative to hold onset. Using our method, a fixed-base manipulator and a Unitree G1 equipped with the same underactuated hand grasp and lift 46 of 50 test objects amid clutter and moving obstacles. The realized grasps also retain a median 94\% of their synthesized quality margin, and each QP solve requires 0.09 ms within a 20 ms control interval. Project page: \texttt{\url{www.clintonenwerem.com/gdf}}.
\end{abstract}

\begin{IEEEkeywords}
Safety-critical control, dexterous grasping, control barrier functions, reactive execution, grasp quality metrics.
\end{IEEEkeywords}

\section{Introduction}
\label{sec:intro}
Multifingered grasp pipelines typically generate candidates \cite{sundermeyer2021contact,wang_dexgraspnet_2023}, select one candidate, plan a collision-free trajectory, and track it through approach, closure, and object relocation~\cite{miller_graspit_2004}. Object-pose error and scene motion can invalidate the selected trajectory, and replanning latency motivates reactive execution \cite{morrison2018ggcnn,singletary2022safety}.
\begin{figure}[t]
  \centering
    \setlength{\fboxsep}{0pt}
    \setlength{\fboxrule}{0.5pt}
    \sbox{\gdfleftcol}{%
    \begin{minipage}[b]{0.485\columnwidth}
      \begin{minipage}[t]{\columnwidth}
        \centering
        \fbox{\cornertag{%
          \begin{tikzpicture}[
              spy using outlines={rectangle, magnification=3,
                  width=0.95cm, height=0.85cm, connect spies}
          ]
            \node[anchor=south west, inner sep=0] (imga) at (0,0) {%
              \includegraphics[width=.84\linewidth, trim=0pt 360pt 0pt 110pt, clip]%
              {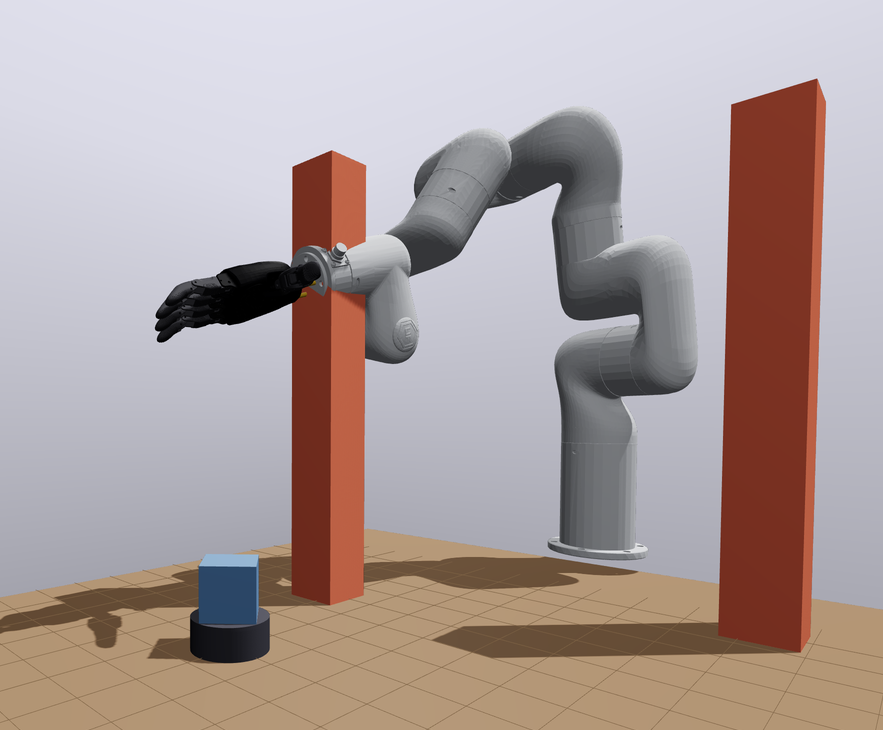}%
            };
            \begin{scope}[x={($(imga.south east)-(imga.south west)$)},
                          y={($(imga.north west)-(imga.south west)$)},
                          shift={(imga.south west)}]
              \coordinate (spypt) at (0.355,0.3250);
            \end{scope}
            \spy [black, thick] on (spypt)
                 in node [anchor=north east, inner sep=0, xshift=-5pt, yshift=-2.5pt] at (imga.north east);
          \end{tikzpicture}%
        }{\textsf{Collision}}}\\[-0.15em]
        {\footnotesize (a) \textsc{Nominal Command}}\\[-0.15em]
        {\footnotesize \(\min_t h_{\mathrm{obs}} \!=\! \textcolor{figbad}{\mathsf{-29.8\,\mathrm{mm}}}\)}
    \end{minipage}\\[.51em]
    \begin{minipage}[b]{\columnwidth}
        \centering
        \fbox{\cornertag{{\includegraphics[width=.84\linewidth, trim=0pt 60pt 0pt 90pt, clip]{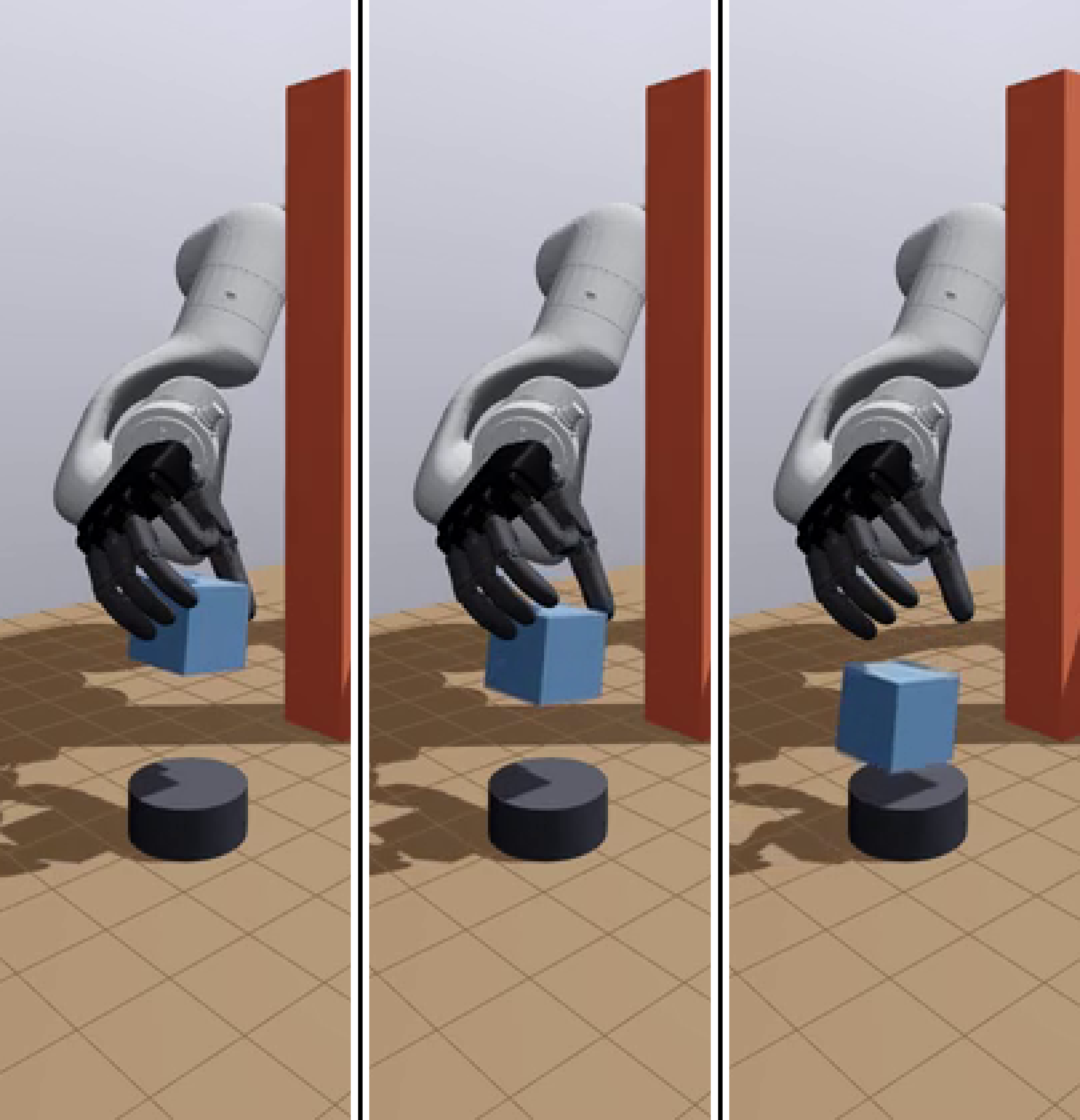}}}{\textsf{Certificate Failure}}}\\[-0.15em]
        {\footnotesize (b) \textsc{{Quality-Neutral QP}}}\\[-0.15em]
        \resizebox{\linewidth}{!}{\footnotesize \({\epsb_{\mathrm{exec}}}\!=\!\textcolor{figbad}{\mathsf{-0.32}}\),\thinspace \({\epsb_{\mathrm{desc}}}\!=\!\textcolor{figok}{\mathsf{1.83{\times}10^{-3}}}\)}
    \end{minipage}
  \end{minipage}}%
  \usebox{\gdfleftcol}\hfill
    \begin{minipage}[b][\dimexpr\ht\gdfleftcol+\dp\gdfleftcol\relax][s]{0.49\columnwidth}
        \centering
        {\footnotesize (c) \textsc{Grasp Distance Field (Ours)}}\\[0.75em]
        \fbox{\cornertag{\includegraphics[width=\dimexpr\linewidth-2\fboxrule\relax]{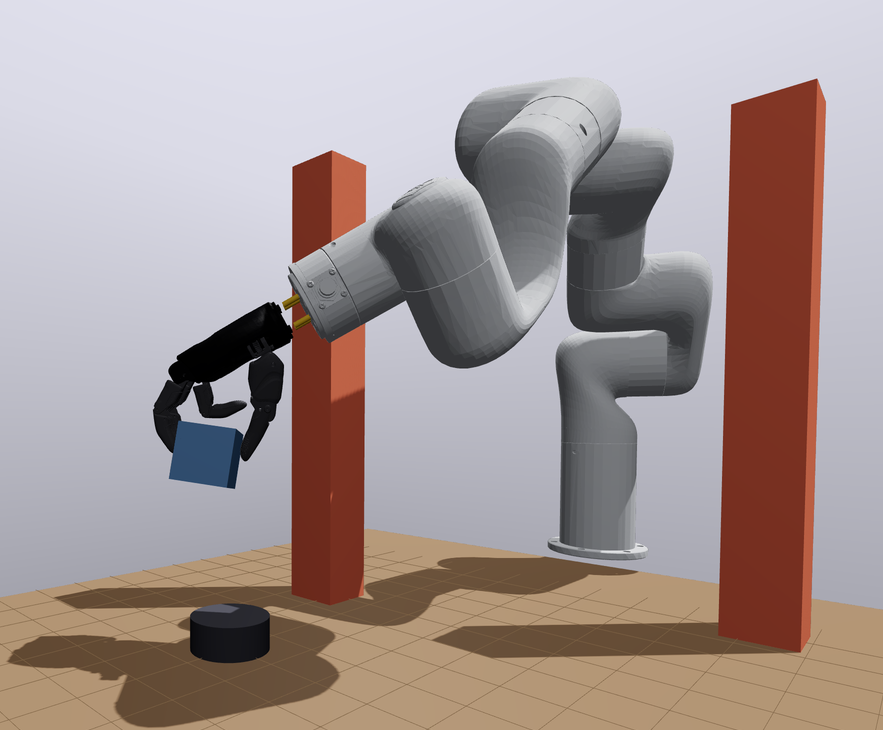}}{\textsf{Task Success}}}\\[-0.15em]
        {\footnotesize \(\min_t h_{\mathrm{obs}} \!=\! \textcolor{figok}{\mathsf{+17.4\,\mathrm{mm}}}\), \(\epsb_{\mathrm{exec}} \!\ge\! \textcolor{figok}{\mathsf{3.80\times10^{-3}}}\)}\par
    \end{minipage}
    \caption{\textbf{Safe Grasp Execution with Quality Sublevel-Set Forward Invariance.} In (a), the trajectory under the nominal command reaches the object but drives the wrist into an obstacle. In (c), our filtered closed loop completes the task with every CBF positive. In (b), the quality-neutral program without a closure certificate grasps and lifts an object that slips almost immediately under gravity (columns from left to right capture task evolution post-lift). With our wrench-quality constraint (\cref{sec:filter}) enforced, the program turns infeasible during the hold and the trial stops before the lift. In the sub-captions, \(h_{\mathrm{obs}}\) is our obstacle CBF (\cref{sec:filter}), and \(\smash{\epsb_{\mathrm{desc}}}\) and \(\smash{\epsb_{\mathrm{exec}}}\) are the risk-adjusted grasp quality margins \cite{enwerem2026firmgrasp} of a stored grasp and of its realized contact set. \textcolor{figbad}{Red} numbers mark CBF or certificate violations, and \textcolor{figok}{green} numbers mark certified margins. We report each figure's associated evaluation in \cref{sec:eval}, and \cref{fig:epsbtrace} plots the per-step margins.}
    \label{fig:bslnvsgdf}
\end{figure}
By contrast, we omit the replanning step entirely and represent the grasp execution target as a set of grasp configurations. We then define a smooth distance field over this set in the configuration space of the coupled arm-hand system, and specify a stationary feedback law on the resulting field. As such, we sidestep the discrete grasp candidate selection step, since the field's negative gradient points toward whichever candidate lies nearest in joint space from the current configuration. 

Since execution must also satisfy several safety constraints alongside convergence to the target set, we therefore project the field-based nominal input to a safe set characterized by a Control Barrier Function (CBF)~\cite{ames2019cbf} defined over critical safety variables, including self-collision, workspace, object, and obstacle clearance. We solve the resulting quadratic program once per control step. Since the QP-based synthesis framework is amenable to dynamic constraints~\cite{ames2019cbf,ames2017cbfqp}, our controller handles workspace changes or perturbed states by evaluating the same state-to-velocity map. We evaluate our controller in three workspace settings with two robotic embodiments (\cref{fig:teaser}), and \cref{fig:bslnvsgdf} contrasts the closed-loop system under the safety filter against the nominal command on a reach-avoid-stay grasping task.

\subsection{Related Work}
\label{sec:related}

Operational-space potential field methods place an attractive potential at the goal and repulsive potentials on obstacles, evaluated at each control step \cite{khatib1986realtime}. Their failure mode, a local minimum away from the goal, has motivated modifications since. The CBF formulation retains the reactive character while converting obstacle avoidance from a penalty into a constraint with a forward-invariance guarantee. Every potential field induces a barrier function while the converse fails, and the CBF therefore strictly generalizes the potential field \cite{singletary2021comparative,ames2019cbf}.

Beyond reactive navigation, CBF methods extend to grasping and manipulation. The authors of~\cite{shawcortez2019cbf} apply them to grasp tracking through controllers that certify no slip and no roll-off, and~\cite{kim2025tactilegrasp} enforces contact-force and closure constraints from measured contacts during the grasp itself. On robotic manipulators, the same filters operate at every control step in clutter, intervening around dense dynamic obstacles in a cooking cell \cite{singletary2022safety}. Perception-built safety functions omit the hand-designed distance model entirely, solving a Poisson problem on occupancy data so that the CBF value comes from measured geometry \cite{bahati2025poisson,wilkinson2026fullbody}, with predictive humanoid implementations showing the construction at whole-body scale \cite{bena2025geometry}. Since a multifingered grasp is not a point, extending such field-based methods to grasping requires the definition of a grasp target.
\begin{figure*}[t]
\centering
\setlength{\tabcolsep}{1.5pt}%
\begin{tabular}{@{}cc@{\hspace{4pt}}cc@{\hspace{4pt}}cc@{\hspace{4pt}}cc@{}}
{\footnotesize Initial} & {\footnotesize Final} &
{\footnotesize Initial} & {\footnotesize Final} &
{\footnotesize Initial} & {\footnotesize Final} &
{\footnotesize Initial} & {\footnotesize Final} \\[-6pt]
\panel[trim=200pt 27pt 0pt 20pt, clip]{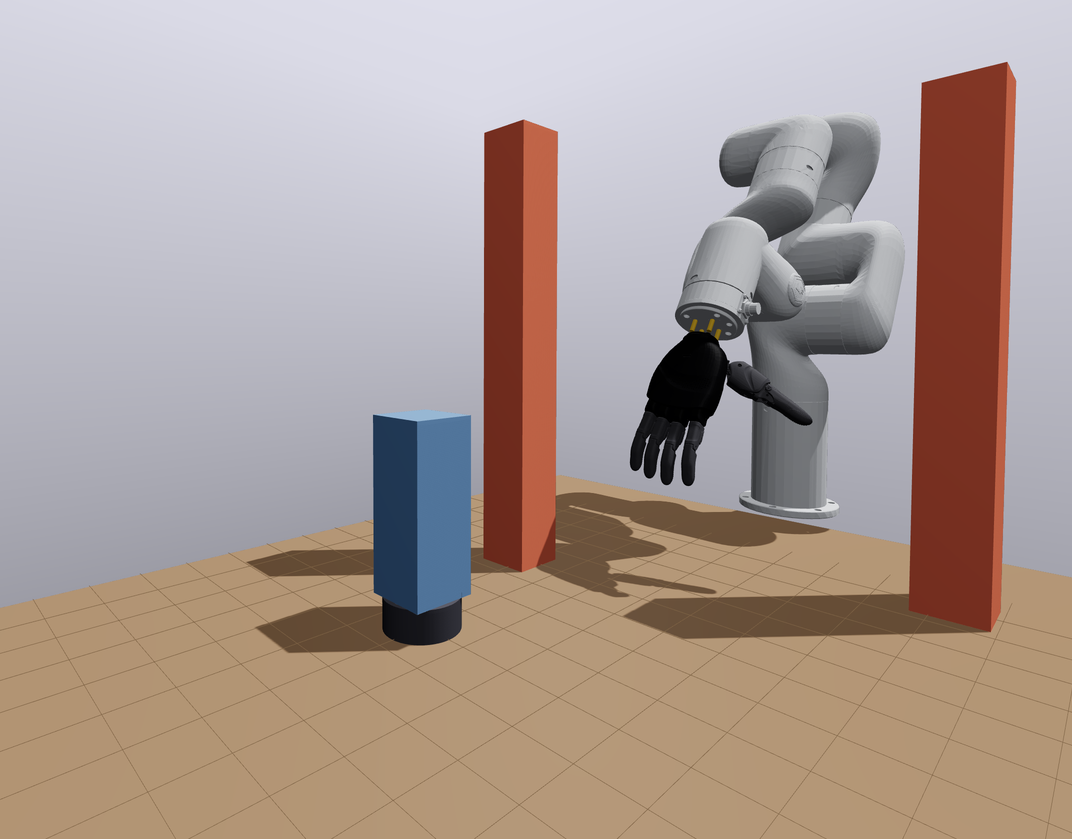} &
\panel[trim=200pt 27pt 0pt 20pt, clip]{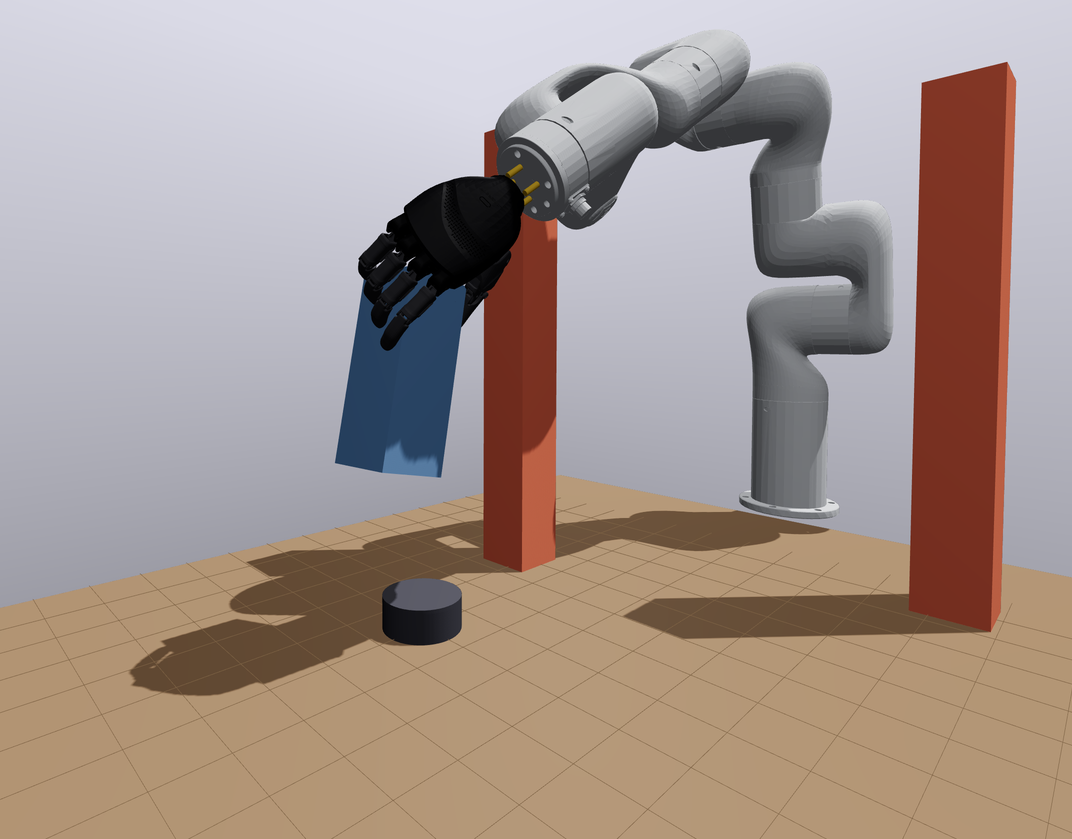} &
\panel[trim=400pt 200pt 45pt 50pt, clip]{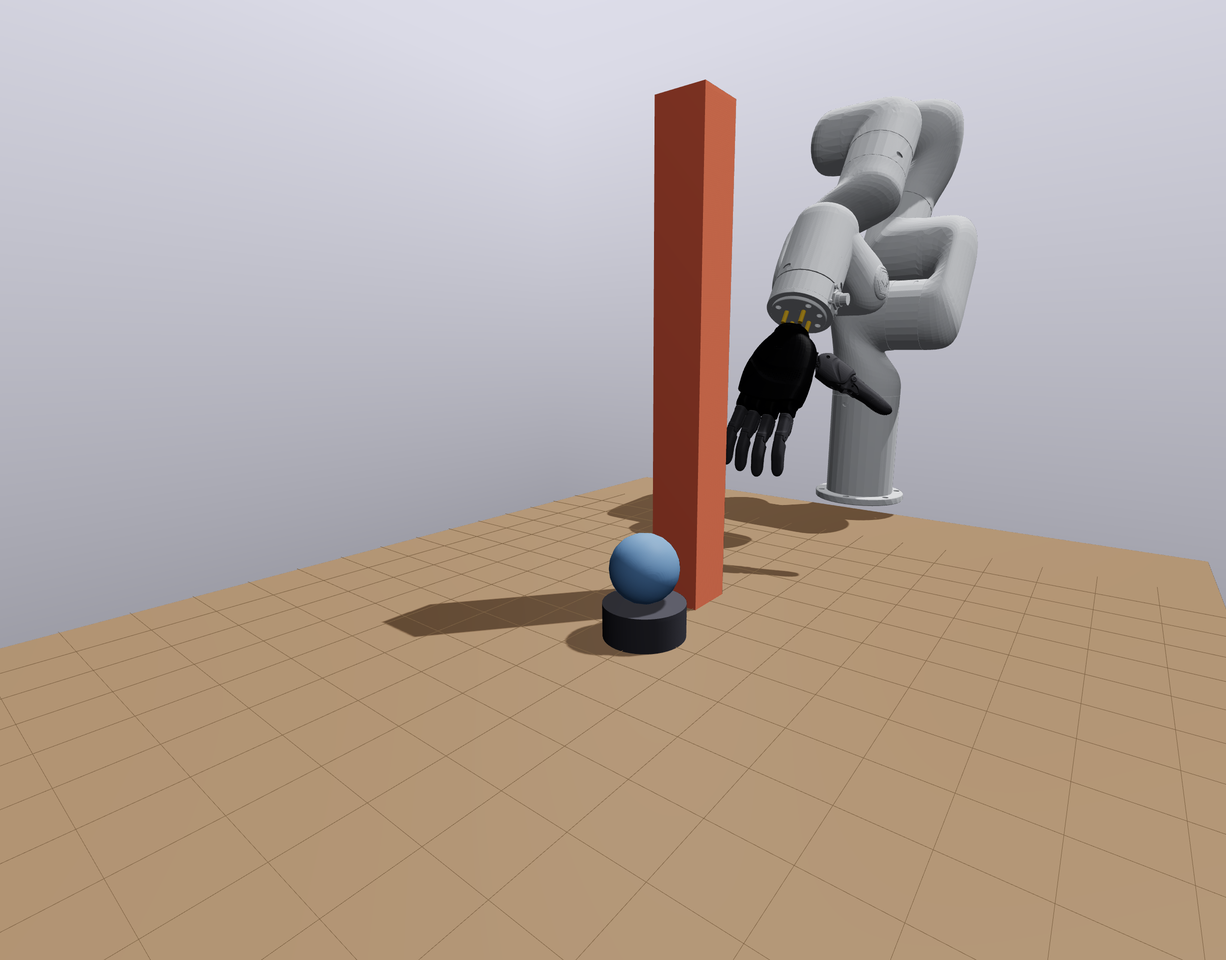} &
\panel[trim=400pt 200pt 45pt 50pt, clip]{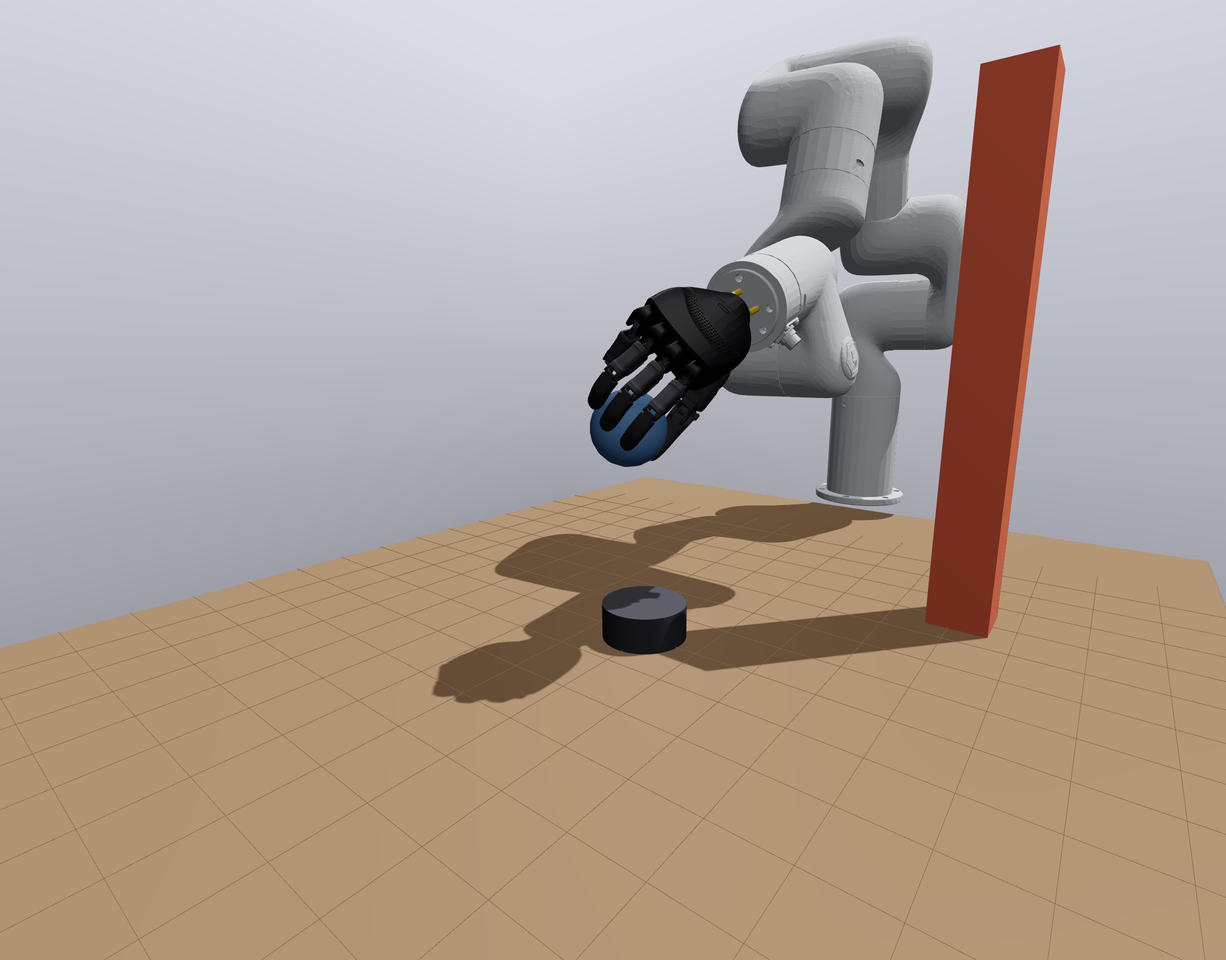} &
\panel[trim=0pt 0pt 0pt 75pt, clip]{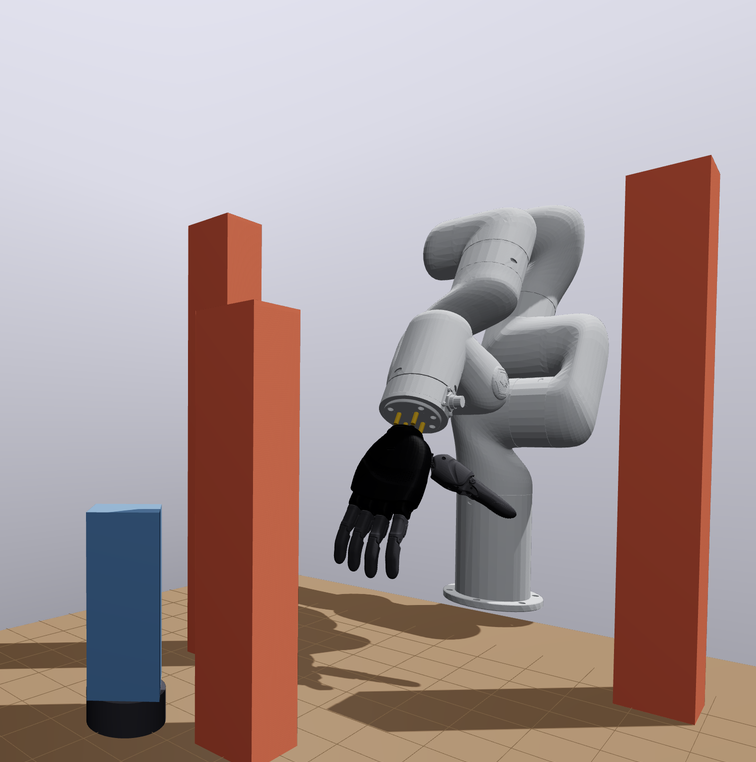}{} &
\panel[trim=0pt 0pt 0pt 75pt, clip]{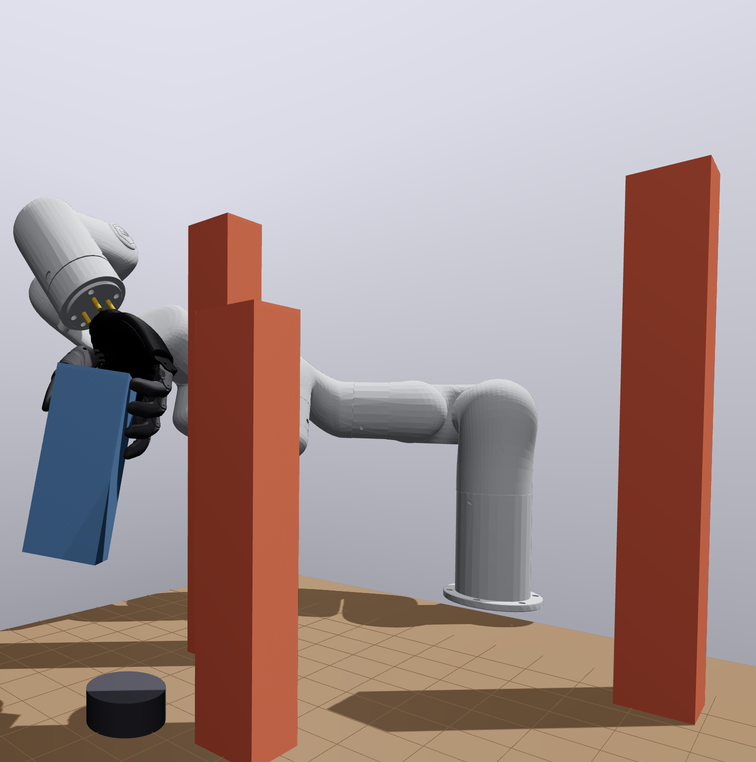}{} &
\panel[trim=150pt 200pt 294pt 50pt, clip]{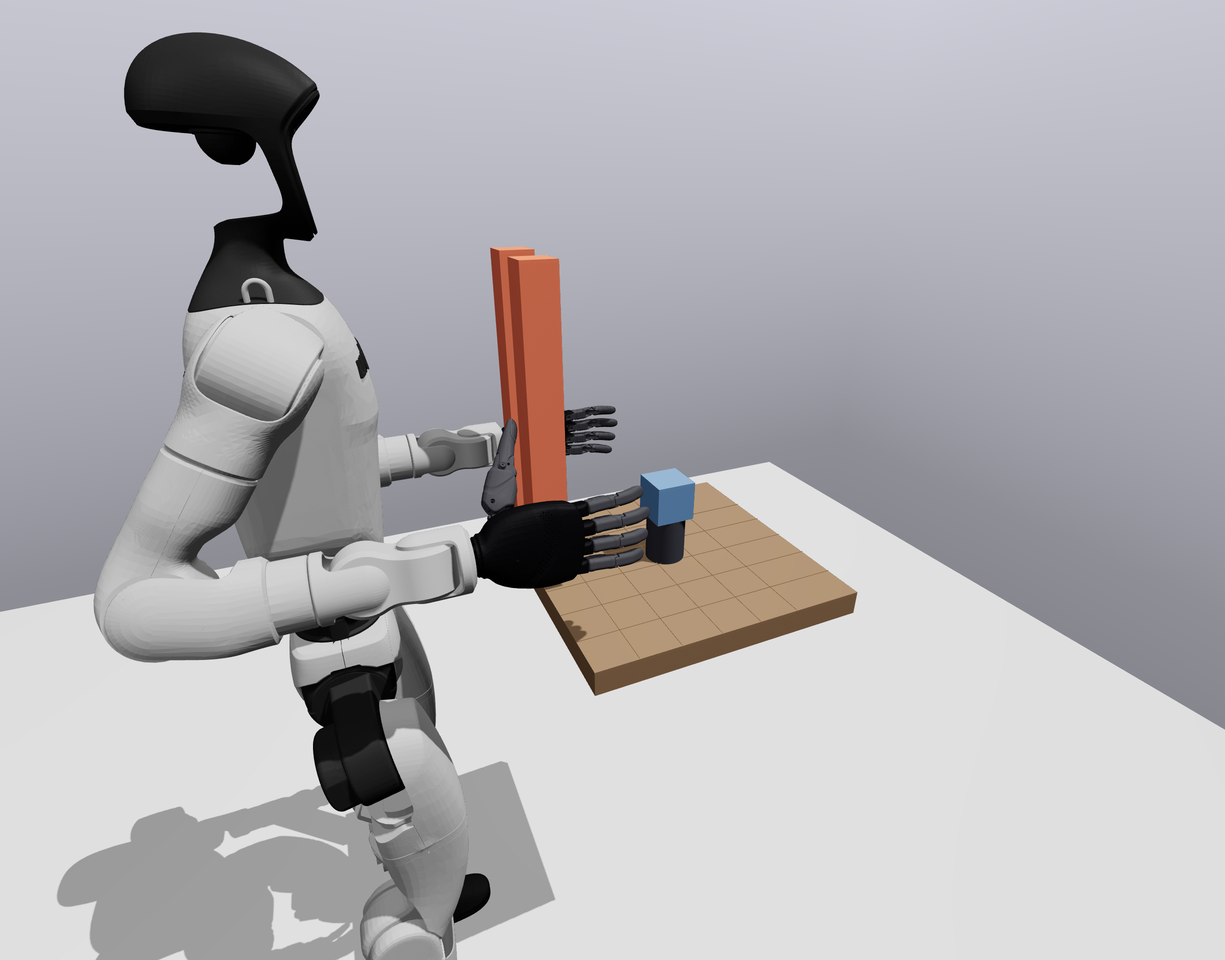} &
\panel[trim=150pt 200pt 294pt 50pt, clip]{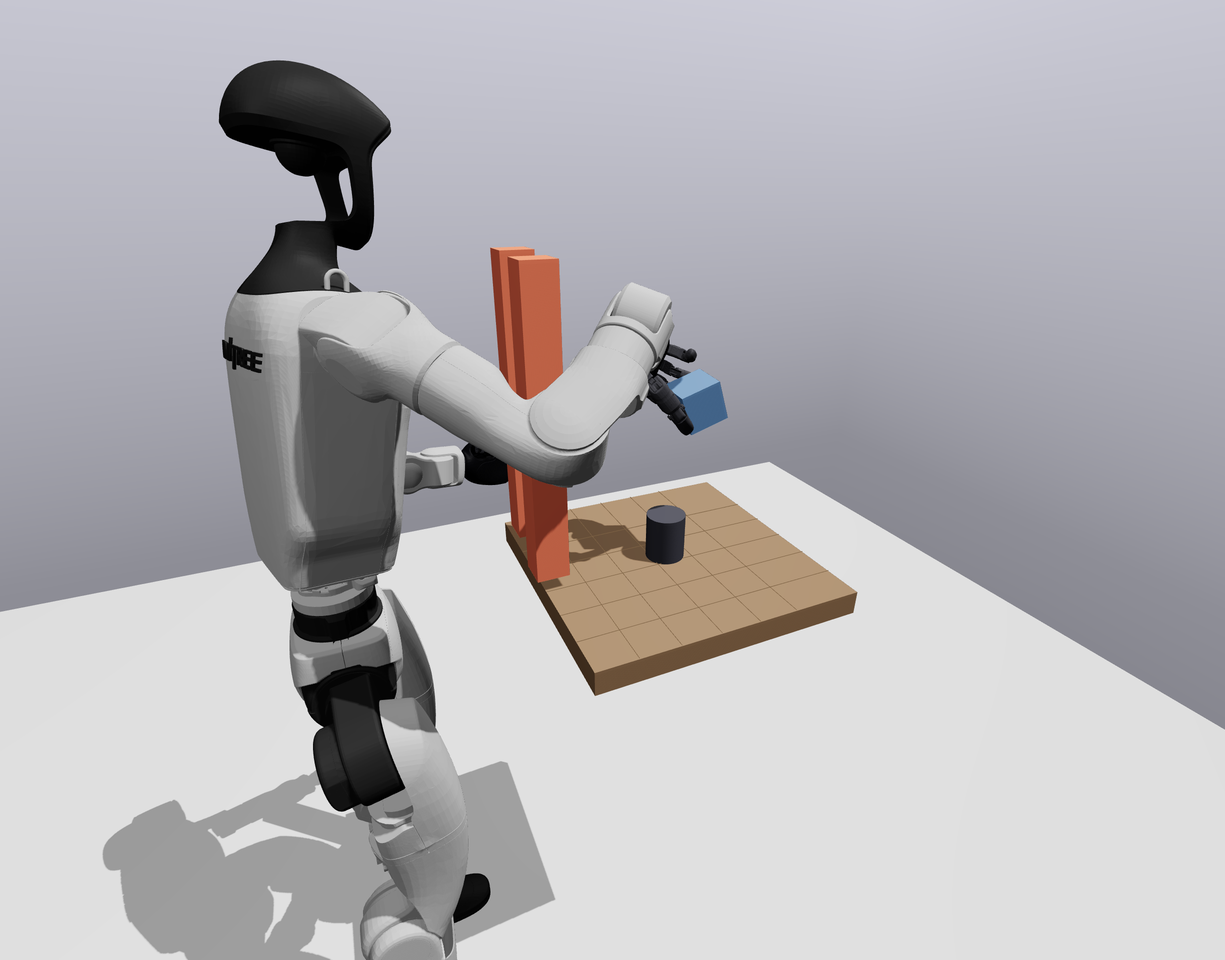} \\[2pt]
\multicolumn{2}{c}{\footnotesize (a) Obstacle Pair} &
\multicolumn{2}{c}{\footnotesize (b) Dynamic Obstacle} &
\multicolumn{2}{c}{\footnotesize (c) Blocked Approach} &
\multicolumn{2}{c}{\footnotesize (d) Unitree G1 Safe Lift Task}
\end{tabular}
\captionof{figure}{\textbf{Plan-Free Grasp Execution Across Four Task Settings.} We compare the initial and final configuration of our field controller in each of four reach-avoid-stay settings (\Cref{sec:experiments}). In (a), a tabletop manipulator grasps an object obstructed by a \(6\times 6 \times 50\,\mathrm{cm}\) obstacle pair. In (b), the same manipulator avoids a dynamic obstacle crossing its approach path. In (c), we consider a denser tabletop workspace comprising an additional obstacle obstructing the approach window, forcing the field around a longer yet safe course. In (d), we consider a different embodiment class altogether, a Unitree G1 humanoid, lifting an object placed behind a \(4\times 4 \times 36\,\mathrm{cm}\) obstacle.}
\label{fig:teaser}
\end{figure*}

By contrast, grasp fields provide the missing target without addressing execution. Neural grasp distance fields learn the distance from a query pose to the manifold of valid grasps for an object, interpret it as a cost, and minimize it inside a trajectory optimizer jointly with smoothness and collision terms \cite{weng2023ngdf}. Such fields are defined in task space, which requires solving inverse kinematics at execution time. Configuration-space distance fields resolve this for obstacle geometry, demonstrating that a distance representation in joint space admits one-step gradient projection where task-space signed distance fields fail on the nonlinearity of the kinematics \cite{li2024cdf}. Our field extends this construction from obstacle boundaries to a discrete set of grasp configurations, and our controller evaluates the field at every control step, whereas the prior work embeds it in an optimizer. Unlike prior distance-field controllers, our framework also certifies a lower bound on the realized force-closure grasp quality throughout execution via a wrench-quality CBF.

Beyond grasp-target fields, candidate grasp sets are widely available. Grasp generators produce grasp poses or full hand configurations for grippers and dexterous hands alike, through analytical \cite{miller_graspit_2004,liFroggerFastRobust2023} and generative~\cite{sundermeyer2021contact,wang_dexgraspnet_2023,enweremEquiDexFlow2026} approaches. The generated grasps satisfy some desired quality metric, from deterministic force closure~\cite{murrayMathematicalIntroductionRobotic2017a,ferrari1992planning} to robust quality measures~\cite{li2023pong,liFroggerFastRobust2023,enwerem2026firmgrasp} that define analytic bounds on the probability of force closure under uncertain friction and object normals. Given a candidate grasp set, our controller operates directly on the individual grasps through a tuple of the object-relative hand-root pose and the corresponding hand configuration (a \emph{grasp descriptor}, \cref{def:targetset}). Standard motion planning frameworks plan an approach motion to a selected pose, with closed-loop grasping typically incorporated as a learned capability within individual systems \cite{morrison2018ggcnn}, and end-to-end policies embed execution in their network weights \cite{chi2023diffusion,black2024pi0}. These methods, however, do not supply a grasp-set target that a reactive controller can query at arbitrary configurations. Our grasp distance fields do.

\subsection{Contributions}
\label{sec:contributions}
Specifically, we make five contributions.
\begin{enumerate}[label=\roman*.]
  \item \emph{Grasp Distance Field and Smooth Approximation}: we introduce the notion of a \emph{grasp distance field} defined over a discrete grasp candidate set in configuration space and establish its two-sided distance bound, smoothness, and bounded-gradient properties. The softmax weights of the resulting field encode the candidate selection as a continuous combination, which eliminates the discrete selection step (\cref{sec:field}).
  \item \emph{A Certified Safe Execution Law}: we give a nominal control along the field's negative gradient, filtered through a combined control Lyapunov and control barrier function quadratic program whose barrier constraints cover self-collision, workspace, object, and obstacle clearance, whose Lyapunov constraints record impeded progress toward the grasp as explicit slack, and whose equality block encodes the dependent-joint constraints of an underactuated hand. We prove that the resulting closed loop renders the safe set forward invariant, and our wrench-quality constraint holds the risk-adjusted force-closure certificate of \cite{enwerem2026firmgrasp} within a prescribed tolerance from hold onset (\cref{sec:filter}).
  \item \emph{A Contact-Mode Switch with Hysteresis}: we give the mode structure that takes execution through contact, the single event a smooth field cannot express, stated as a hybrid system with per-mode constraint sets (\cref{sec:modes}).
  \item \emph{A Wrench-Certificate Bridge}: we restrict admission to grasps whose risk-adjusted force-closure margin \cite{enwerem2026firmgrasp} exceeds a threshold and evaluate the executed grasp under the same margin, connecting the joint-space field to the wrench-space certificates of the grasping literature and reporting the min-weight margin as a baseline (\cref{sec:bridge}).
  \item \emph{Full-Model Verification}: we verify the controller numerically on a 7-degree-of-freedom arm with an 11-joint multifingered hand, including a 50-object grasp descriptor evaluation, a moving-obstacle case on the same constraints, and a humanoid (Unitree G1) instantiation that runs the controller without modification. In a recorded execution over a five-candidate set, the per-step softmax weights concentrate on the nearest candidate at every step, with no separate selection step in the loop (\cref{sec:experiments}).
\end{enumerate}

\subsection{Organization}
\label{sec:organization}
We review the wrench-space certificates and safety-filter constructions our controller builds on in \cref{sec:prelim} and state the problem in \cref{sec:problem}. In \cref{sec:field}, we define the grasp distance field and establish its properties. In \cref{sec:filter}, we give the safe execution law in three parts, the filtered program and its invariance guarantee (\cref{sec:filter-qp}), the contact-mode switch (\cref{sec:modes}), and the bridge from the field to the wrench certificates (\cref{sec:bridge}). We report the numerical experiments in \cref{sec:experiments} and discuss limitations and extensions in \cref{sec:discussion}.

\section{Preliminaries}
\label{sec:prelim}
\paragraph*{Notation}
\(\Real\) and \(\Realnn\) denote the real and nonnegative real numbers. For a symmetric positive definite \(\Lambda\), \(\lVert x \rVert_{\Lambda} = (x^{\top}\Lambda x)^{1/2}\), and \(\lVert x \rVert\) with no subscript is the Euclidean norm. \(\Int(A)\), \(\partial A\), and \(\conv(A)\) denote interior, boundary, and convex hull. A continuous function \(\alpha \colon \Real \to \Real\) is of extended class \(\mathcal{K}_\infty\) if it is strictly increasing, \(\alpha(0)=0\), and \(\alpha(r) \to \pm\infty\) as \(r \to \pm\infty\), and a continuous function \(\gamma \colon \Realnn \to \Realnn\) is of class \(\mathcal{K}\) if it is strictly increasing with \(\gamma(0) = 0\). Throughout, \(\alpha\) denotes a barrier rate and \(\gamma\) a convergence rate. Rigid transforms are elements of \(\SEthree\), and \(X_{AB} \in \SEthree\) denotes the pose of frame \(B\) expressed in frame \(A\). \(\mathbf{1}\) is the vector of ones, \(\mathbb{I}[\cdot]\) is the scalar indicator (with \([\cdot]\) the Iverson bracket~\cite{iverson1962programming,knuth1992notation}), equal to \(1\) when the condition (predicate) in brackets holds and \(0\) otherwise, and \(a \succeq b\) is the component-wise inequality.

\subsection{Wrench Geometry and Grasp Certificates}
\label{sec:prelim-wrench}
We consider a grasp of \(n_c\) fingertip contacts with a target object, under the standard quasi-static contact model \cite{murrayMathematicalIntroductionRobotic2017a,rimonMechanicsRobotGrasping2019}. Contact \(i\) at point \(p_i\) with inward unit normal \(\hat n_i\) applies forces inside a Coulomb friction cone with coefficient \(\mu_i\), and the grasp map \(G(q) \in \Real^{6 \times 3 n_c}\) collects the per-contact basis directions so that stacked contact forces \(F_C \in \Real^{3 n_c}\) produce the net object wrench \(w_O = G(q) F_C\). For computation, we linearize each cone into a pyramid with \(n_s\) edges. Every edge force maps through \(G(q)\) to a basis wrench, and the \(m = n_c n_s\) basis wrenches form the columns of the wrench matrix \(W(q) \in \Real^{6 \times m}\), whose convex hull is the grasp wrench space (GWS).

\begin{definition}[Force Closure~\cite{murrayMathematicalIntroductionRobotic2017a}]
\label{def:fc}
A grasp achieves force closure if its wrench matrix satisfies \(0 \in \Int\big(\conv(W(q))\big)\).
\end{definition}

\begin{definition}[Ferrari-Canny Margin~\cite{ferrari1992planning}]
\label{def:eps}
The Ferrari-Canny margin of a grasp is
\(\varepsilon(q) = \max\{\, r \ge 0 \mid B(0,r) \subseteq \conv(W(q)) \,\}\),
the radius of the largest origin-centered ball inscribed in the GWS.
\end{definition}

\begin{definition}[Min-Weight Metric~\cite{liFroggerFastRobust2023}]
\label{def:minweight}
The min-weight margin is \(\lbar(q) := m\,\ell^*(q) \in (-\infty, 1]\), where \(\ell^*(q)\) solves
\begin{equation}
\ell^*(q) = \max_{a \in \Real^{m},\, \ell \in \Real} \; \ell
\ \ \text{s.t.}\ \ W(q)\, a = 0,\; \mathbf{1}^{\top} a = 1,\; a \succeq \ell\,\mathbf{1},
\label{eq:minweight}
\end{equation}
with \(a\) weighting the basis wrenches (written \(\alpha\) in \cite{liFroggerFastRobust2023}, changed here so the barrier rate of \cref{sec:prelim-cbf} retains its symbol). Whenever the columns of \(W(q)\) hold seven affinely independent basis wrenches, \eqref{eq:minweight} is feasible and \(\ell^*\) is positive exactly on force closure \cite[Thm.~1]{liFroggerFastRobust2023}.
\end{definition}
The min-weight metric is almost-everywhere differentiable, and optimizing it through its linear program yields sub-second grasp synthesis~\cite{li_drop_2025,suh_dexterous_2026}. However, uncertain object normals~\cite{li2023pong}, object pose~\cite{weiszPoseErrorRobust2012}, and friction~\cite{enweremVariationalNeuralBeliefParameterizations2026a} degrade grasp-execution success. We therefore generate our candidate sets under the risk-adjusted margin \(\epsb\) of \firmgrasp{}~\cite{enwerem2026firmgrasp}, and report \(\lbar\) as a baseline.

\begin{definition}[Risk-Adjusted Margin~\cite{enwerem2026firmgrasp}]
\label{def:riskmargin}
Let the contact friction coefficient be random with \(\mu \sim p_\mu\) and fix a confidence \(\beta \in (0,1)\). The risk-adjusted friction is the Conditional Value-at-Risk (CVaR) of the friction distribution, \(v_\beta := \mathrm{CVaR}_\beta(\mu)\), the mean of its lowest \(1-\beta\) fraction. The risk-adjusted margin is then the Ferrari-Canny radius of the GWS constructed at the risk-adjusted friction,
\begin{equation}
\epsb(q) := \varepsilon\big(q, v_\beta\big),
\label{eq:riskmargin}
\end{equation}
signed by the origin-to-facet distance when the origin lies outside the hull, where \(\epsb < 0\) measures the margin by which the risk-adjusted friction violates closure. A positive \(\epsb\) certifies force closure with probability at least \(\beta\) under \(p_\mu\), and \(\epsb\) is differentiable in the grasp parameters per \cref{lem:epsgrad}. We write \(\epsb_{\mathrm{desc}}\) for the margin of a stored grasp descriptor (\cref{def:targetset}), evaluated at its synthesized contact set. We write \(\epsb_{\mathrm{exec}}\) for the same margin on the contact set the controller realizes at hold onset. We compare the two in \cref{sec:bridge}.
\end{definition}

In addition, \cref{lem:ordering} bounds the Ferrari-Canny margin from below by the min-weight margin, up to an object-dependent constant. Every stored descriptor satisfies \(\lbar > 0\), and at synthesis we rank candidates by \(\epsb\) and reject every candidate with \(\epsb \le k_{\mathrm{adm}} = 0\).

\begin{lemma}[Certificate Comparison, {\cite[Thm.~2]{liIntrinsicRobustness2024}}]
\label{lem:ordering}
For every object \(\mathcal{O}\) there exists a constant \(K(\mathcal{O}) > 0\) such that \(K(\mathcal{O})\, \ell^*(q) \le \varepsilon(q)\) uniformly over the admissible wrench sets of \(\mathcal{O}\).
\end{lemma}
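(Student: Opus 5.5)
The plan is to compare the two certificates through the geometry of the wrench matrix \(W(q)\). The min-weight value \(\ell^*\) measures how uniformly the origin can be written as a convex combination of the columns. The Ferrari-Canny radius \(\varepsilon\) measures how far the origin sits inside \(\conv(W(q))\).

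If \(\ell^*(q)\le 0\), then \(K\ell^*\le 0\) for any \(K>0\). In that case the inequality holds trivially when \(\varepsilon\ge 0\), which is the case under \cref{def:eps}. (For the signed variant used in \cref{def:riskmargin}, a separate bound would be needed; we work with \cref{def:eps} as stated.) So it remains to treat \(\ell^*>0\), which is exactly force closure by \cref{def:minweight}.

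Fix an optimal pair \((a,\ell^*)\) with \(W a=0\), \(\mathbf{1}^\top a=1\), and \(a\succeq \ell^*\mathbf{1}\). Take any wrench \(w\) with \(\lVert w\rVert\le r\). The goal is to show \(w\in\conv(W)\) for \(r=K\ell^*\). Write \(w=W b\) for some coefficient vector \(b\), and consider the perturbed weights
\begin{equation*}
a' = \frac{a+b-\tfrac{\mathbf{1}^\top b}{m}\mathbf{1}}{1},
\end{equation*}
possibly after a shift so that \(\mathbf{1}^\top a'=1\). These weights still combine the columns to give \(w\), provided \(W\mathbf{1}\)-type corrections are handled. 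A cleaner route avoids \(W\mathbf{1}\) entirely: express \(w\) using only coefficient vectors \(b\) with \(\mathbf{1}^\top b=0\), which amounts to using the differences of columns. Then \(a+b\) is a convex combination representing \(w\) as long as \(\lVert b\rVert_\infty\le \ell^*\).

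The key step is therefore a uniform bound on the minimum-norm solution of
\begin{equation*}
W b = w,\qquad \mathbf{1}^\top b = 0.
\end{equation*}
Write this as \(\tilde W b=(w,0)\) with \(\tilde W=\begin{bmatrix}W\\ \mathbf{1}^\top\end{bmatrix}\). If \(\tilde W\) has full row rank 7, which holds when there are seven affinely independent columns (the same hypothesis as in \cref{def:minweight}), then \(b=\tilde W^{+}(w,0)\) satisfies
\begin{equation*}
\lVert b\rVert_\infty\le \sigma_{\min}(\tilde W)^{-1}\lVert w\rVert .
\end{equation*}
Choosing \(r=\sigma_{\min}(\tilde W)\,\ell^*\) gives \(a+b\succeq 0\), so \(B(0,r)\subseteq\conv(W)\) and \(\varepsilon\ge \sigma_{\min}(\tilde W)\,\ell^*\).

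Finally, set \(K(\mathcal{O}):=\inf \sigma_{\min}(\tilde W)\) over the admissible wrench sets of \(\mathcal{O}\). These wrench sets are built from contact points on the compact object surface, unit normals, fixed friction, and a torque scaling by the object's characteristic length.

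The main obstacle is showing that this infimum is strictly positive. That requires the admissible set to be compact and every member to satisfy the affine-rank condition. My first approach is to define the admissible class, as in \cite{liIntrinsicRobustness2024}, as a closed set of contact configurations that excludes rank-deficient (degenerate) ones. Continuity of \(\sigma_{\min}\) on this compact set then gives a positive minimum. If degenerate configurations cannot be excluded a priori, I would instead cite the bound directly from \cite[Thm.~2]{liIntrinsicRobustness2024}, whose constant depends only on \(\mathcal{O}\).
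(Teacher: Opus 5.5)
Your proposal is correct, but it takes a different route from the paper. The paper gives no argument of its own: it imports the inequality from \cite[Thm.~2]{liIntrinsicRobustness2024} and never defines the ``admissible wrench sets.'' You instead derive a self-contained pointwise bound. With \(\tilde W=[W;\mathbf 1^\top]\) of full row rank, the minimum-norm solution of \(\tilde W b=(w,0)\) satisfies \(\lVert b\rVert_\infty\le\lVert w\rVert/\sigma_7(\tilde W)\); your \(\sigma_{\min}\) should be read as the seventh, smallest, singular value of this \(7\times m\) matrix. Then \(a+b\succeq 0\) is a convex representation of every \(w\) with \(\lVert w\rVert\le\sigma_7(\tilde W)\,\ell^*\), so \(\varepsilon\ge\sigma_7(\tilde W)\,\ell^*\).

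Your route buys an explicit constant. It also recovers, as a by-product, the direction \(\ell^*>0\Rightarrow\) force closure of \cite[Thm.~1]{liFroggerFastRobust2023}. Most importantly, it shows exactly what uniformity costs: \(K(\mathcal O)=\min\sigma_7(\tilde W)\) over the class. The citation buys brevity and an object-only constant, but it hides the hypotheses. The abandoned \(a'\) construction plays no role and should be deleted.

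Commit to the non-degeneracy restriction rather than keeping the fallback of citing the theorem. If rank-deficient or near-rank-deficient configurations are admissible, the statement itself is false, and no citation can rescue it.

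\emph{Degenerate case.} Take two antipodal frictional point contacts at \(\pm e_x\) with symmetric pyramids. Then \(W\mathbf 1=0\), hence \(\ell^*\ge 1/m>0\). But every column has zero torque about \(e_x\), so \(\varepsilon=0\).

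\emph{Near-degenerate case.} Add a third contact at \((1,\delta,0)\) on the same face of a box. For \(\delta>0\) this gives genuine force-closure grasps with seven affinely independent columns. Adjusting the weights by \(O(\delta/\mu)\) balances the small couple, so \(\ell^*\) stays bounded away from zero as \(\delta\to 0\). Yet every column has \(\lvert\tau_x\rvert\le\delta\mu\), so \(\varepsilon\le\delta\mu\to 0\).

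Hence the admissible class must be a compact set bounded away from affine-rank deficiency of \(\tilde W\), which is precisely your hypothesis. Under it, continuity of \(\sigma_7\) gives \(K(\mathcal O)>0\), and your argument is complete.
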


\begin{lemma}[Intrinsic Robustness, {\cite[Cor.~1]{liIntrinsicRobustness2024}}]
\label{lem:intrinsic}
Let a grasp have margin \(\varepsilon(q) > 0\) with basis wrenches \(\bar w_s\). Every realized grasp whose basis wrenches satisfy \(\lVert w_s - \bar w_s \rVert \le \varepsilon(q)\) for all \(s\) remains force closed.
\end{lemma}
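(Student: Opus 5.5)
The plan is to reduce Lemma \ref{lem:intrinsic} to a perturbation statement about the positive spanning property of the basis wrenches. Since \(\varepsilon(q) > 0\), the closed ball \(B(0,\varepsilon(q))\) lies in \(\conv\{\bar w_s\}\) by Definition \ref{def:eps}. Hence for every unit vector \(u \in \Real^6\), the support function satisfies \(\max_s u^{\top}\bar w_s \ge \varepsilon(q)\); otherwise the point \(\varepsilon(q)\,u\) would be strictly separated from the hull. Force closure of the realized grasp, \(0 \in \Int(\conv\{w_s\})\), is equivalent to \(\max_s u^{\top} w_s > 0\) for every unit \(u\), by the supporting hyperplane theorem. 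So it suffices to show that the realized support function stays positive in every direction.

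The key step is the support-function comparison. For fixed unit \(u\), let \(s^*\) attain \(\max_s u^{\top}\bar w_s\). Cauchy--Schwarz and the hypothesis \(\lVert w_{s^*} - \bar w_{s^*}\rVert \le \varepsilon(q)\) give
\(
u^{\top} w_{s^*} \ge u^{\top}\bar w_{s^*} - \lVert w_{s^*}-\bar w_{s^*}\rVert \ge \varepsilon(q) - \varepsilon(q) = 0 .
\)
Therefore \(\max_s u^{\top} w_s \ge 0\) for every \(u\), which shows only that \(0 \in \conv\{w_s\}\).

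The main obstacle is upgrading this weak inequality to a strict one, since the tolerance in the statement is non-strict. I would first try to show that equality \(u^{\top} w_{s^*} = 0\) forces \(\bar w_{s^*}\) to lie exactly on the inscribed sphere with \(u^{\top}\bar w_{s^*}=\varepsilon(q)\), and also forces \(w_{s^*} = \bar w_{s^*} - \varepsilon(q)\,u\). If this must hold simultaneously for every maximizer in direction \(u\), the configuration is degenerate. This degenerate case is where I expect the argument from \cite[Cor.~1]{liIntrinsicRobustness2024} to rely on its precise convention: either a strict inequality \(\lVert w_s - \bar w_s\rVert < \varepsilon(q)\), or a margin defined as a supremum so that the boundary case is excluded. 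I would handle it by following their convention. Concretely, I would read the hypothesis as strict, or use the strict form \(\max_s u^{\top}\bar w_s > \lVert w_s - \bar w_s\rVert\) that holds generically. Then compactness of the unit sphere and continuity of \(u \mapsto \max_s u^{\top} w_s\) give a uniform positive lower bound, so \(0 \in \Int(\conv\{w_s\})\).

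Finally, I would conclude by Definition \ref{def:fc} that the realized grasp is force closed. I would note that the same computation gives the quantitative bound \(\varepsilon_{\mathrm{real}} \ge \varepsilon(q) - \max_s \lVert w_s - \bar w_s\rVert\), which is the form the wrench-quality CBF later uses.
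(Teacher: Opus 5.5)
Your support-function argument is sound. It takes a different route from the paper, which gives no proof and imports the result from \cite[Cor.~1]{liIntrinsicRobustness2024}. The chain runs as follows:
\begin{enumerate}[label=(\roman*)]
\item \(\max_s u^{\top}\bar w_s \ge \varepsilon(q)\) for every unit \(u\);
\item Cauchy--Schwarz at a maximizer;
\item \(0 \in \Int(\conv\{w_s\})\) if and only if the support function is strictly positive.
\end{enumerate}
This is a complete, elementary proof once the tolerance is strict. Set \(\delta := \max_s \lVert w_s - \bar w_s \rVert < \varepsilon(q)\). Then \(\max_s u^{\top} w_s \ge \varepsilon(q) - \delta > 0\) uniformly in \(u\), so \(B(0, \varepsilon(q) - \delta) \subseteq \conv\{w_s\}\). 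Since there are finitely many \(s\), no compactness argument is needed. Compared with the citation, your route gives this quantitative margin and shows exactly where the tolerance is consumed.

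You should be firmer about the boundary case. It is not a degenerate configuration that a convention or a genericity assumption disposes of. It is a counterexample to the lemma as written. Let \(u^\star\) minimize the support function of \(\conv\{\bar w_s\}\) over the unit sphere, so that \(\max_s u^{\star\top}\bar w_s = \varepsilon(q)\); under \cref{ass:lp} this is the nearest-facet normal. Now translate every basis wrench: \(w_s = \bar w_s - \varepsilon(q)\,u^\star\).
\begin{itemize}
\item Every tolerance holds with equality.
\item Nevertheless \(\max_s u^{\star\top} w_s = 0\).
\item Your own weak step gives \(0 \in \conv\{w_s\}\), so the origin lies on \(\partial \conv\{w_s\}\).
\end{itemize}
Hence the realized grasp fails \cref{def:fc}. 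Because the lemma quantifies over every realized grasp within the tolerance, your fallback that ``the strict form holds generically'' is not a valid proof step and should be dropped. The correct conclusion has two parts:
\begin{itemize}
\item The non-strict hypothesis yields only \(0 \in \conv\{w_s\}\), which your Cauchy--Schwarz step already establishes.
\item Force closure needs \(\lVert w_s - \bar w_s \rVert < \varepsilon(q)\) for all \(s\).
\end{itemize}
State the result that way rather than deferring to the cited source's convention.

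Two smaller corrections. First, in the equality analysis, \(\bar w_{s^*}\) is forced onto the supporting hyperplane tangent to the inscribed ball at \(\varepsilon(q)\,u\), not onto the inscribed sphere. Second, your closing remark that the wrench-quality CBF uses the bound \(\varepsilon_{\mathrm{real}} \ge \varepsilon(q) - \max_s \lVert w_s - \bar w_s \rVert\) misreads the paper. There \(h_{\mathrm{wq}}\) is built directly from \(\epsb(q)\) relative to \(\epsb(q_0)\). Moreover, \cref{rem:transfer} says explicitly that the controller measures against the tolerance of \cref{lem:intrinsic} without imposing it as a constraint.
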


Given these certificates, our wrench-quality constraint requires the gradient of \(\epsb\) at every control step, and \cref{lem:epsgrad} gives it in closed form as the nearest-facet normal mapped through the wrench Jacobians. In practice, each basis wrench factors through the same contact Jacobians the self-collision constraints already differentiate.

\begin{assumption}[Facet Regularity~\cite{qiu_new_2022}]
\label{ass:lp}
At every \(q\) considered, the GWS constructed at the risk-adjusted friction \(v_\beta\) has a unique facet nearest the origin, with six affinely independent basis wrenches as its vertices.
\end{assumption}

\begin{lemma}[Risk-Adjusted Margin Gradient]
\label{lem:epsgrad}
Let \cref{ass:lp} hold at \(q\) with \(\epsb(q) > 0\), and let \(F^\star\) denote the facet of the GWS, constructed at the risk-adjusted friction \(v_\beta\), nearest the origin, with vertex index set \(S^\star\), outward unit normal \(u^\star(q) \in \Real^6\), and convex weights \(a^\star_s(q) \ge 0\), \(\sum_{s \in S^\star} a^\star_s(q) = 1\), that locate the projection of the origin onto \(F^\star\). Then \(\epsb\) is differentiable at \(q\) and
\begin{equation}
\nabla_q \epsb(q) = \sum_{s \in S^\star} a^\star_s(q)\, u^\star(q)^{\!\top} \frac{\partial w_s(q)}{\partial q}.
\label{eq:epsgrad}
\end{equation}
\end{lemma}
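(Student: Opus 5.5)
The plan is to write $\epsb$ locally as the distance from the origin to a single affine hyperplane, and then differentiate that expression by an envelope-type argument. Under \cref{ass:lp}, let $S^\star$ index the six affinely independent vertices of the nearest facet $F^\star$. The hyperplane through $\{w_s(q)\}_{s\in S^\star}$ has a unique unit normal $u(q)$, oriented outward. It satisfies $u(q)^\top (w_s(q)-w_t(q))=0$ for $s,t\in S^\star$. Its offset is $d(q)=u(q)^\top w_s(q)$, which is the same for every $s\in S^\star$. Since $\epsb(q)>0$, the origin is interior, so $d(q)>0$ and $\epsb(q)=d(q)$.

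\textbf{Step 1 (local persistence of the facet).} Each $w_s(q)$ is smooth in $q$, because it factors through the contact Jacobians at fixed friction $v_\beta$. Uniqueness of the nearest facet is a strict inequality between finitely many facet distances, and affine independence is an open condition. Both therefore persist on a neighborhood of $q$. The combinatorial structure of the hull near $F^\star$ also persists: a facet whose supporting hyperplane strictly separates the remaining vertices stays a facet under small perturbation. This uses general position of the other vertices relative to that hyperplane, which I will assume follows from \cref{ass:lp}. Hence, near $q$, $\epsb$ coincides with $d(\cdot)$ for the fixed index set $S^\star$. I expect this step to be the main obstacle, because degeneracies with vertices lying on the hyperplane must be excluded. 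I would handle it by noting that \cref{ass:lp} is imposed at every $q$ considered, so the nearest facet is locally unique and the minimum over facets of their distances is locally attained by one smooth branch.

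\textbf{Step 2 (smoothness of $u$ and $d$).} Write $u$ as the normalized generalized cross product of the five edge vectors $w_s-w_{s_0}$. This is smooth wherever those edge vectors are linearly independent, which holds by affine independence. Therefore $d$ is smooth, and $\epsb$ is differentiable at $q$.

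\textbf{Step 3 (the derivative).} Differentiate $d=\sum_{s\in S^\star} a^\star_s\, u^\top w_s$, which holds for any convex weights because $u^\top w_s=d$ for all $s\in S^\star$. This gives
$\nabla d=\sum_s a^\star_s\,u^\top \partial w_s/\partial q+\sum_s a^\star_s\,(\partial u/\partial q)^\top w_s$.
The second term equals $(\partial u/\partial q)^\top p$, where $p=\sum_s a^\star_s w_s$ is the projection of the origin onto $F^\star$. Since $p=d\,u$ and $u^\top u\equiv 1$, we have $u^\top \partial u/\partial q=0$, so the second term vanishes. The weights $a^\star_s$ are exactly those locating this projection, and $a^\star_s\ge 0$ because the nearest point of the facet to the origin lies in the facet. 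This yields \eqref{eq:epsgrad}, with $u^\star=u(q)$.
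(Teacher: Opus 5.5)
Your proposal is correct and takes the paper's route. You write $\epsb$ near $q$ as the common offset $u^{\star\top}w_s$ of the nearest facet's hyperplane, invoke uniqueness of that facet for local persistence, and differentiate $\sum_{s\in S^\star}a^\star_s u^{\star\top}w_s$. You then kill the $du^\star$ term through $p^\star=\epsb u^\star$ and $\lVert u^\star\rVert=1$. Freezing the weights, as you do, removes the $da^\star$ term that the paper cancels via $\sum_s da^\star_s=0$, and your generalized cross product replaces its implicit-function-theorem step.

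The general-position condition you flag in Step 1 does not follow from \cref{ass:lp} as written. A basis wrench lying inside $F^\star$ without being one of its vertices can make $\epsb$ nondifferentiable: moving it outward raises the inscribed radius to first order, while moving it inward does not. You should therefore state that condition as an added hypothesis. The paper's appeal to uniqueness of the nearest facet tacitly needs the same condition.
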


\begin{proof}
With the origin interior to the hull, the inscribed radius of \cref{def:riskmargin} equals the distance from the origin to the nearest facet. Hence, \(\epsb(q) = u^\star(q)^{\!\top} w_s(q)\) for every \(s \in S^\star\), since the supporting hyperplane of \(F^\star\) holds each of its vertices at the common offset. Affine independence of the six vertices makes \((u^\star, \epsb)\) the unique solution of a nonsingular linear system, \(u^{\!\top} w_s = \epsb\) over \(s \in S^\star\) with \(\lVert u \rVert = 1\), and makes \(a^\star\) the unique weight vector of the projection \(p^\star = \epsb\, u^\star = \sum_{s \in S^\star} a^\star_s w_s\). By the implicit function theorem, \(u^\star\) and \(a^\star\) therefore extend in a continuously differentiable manner to a neighborhood of \(q\), and uniqueness of the nearest facet ensures the minimum remains attained on \(F^\star\) there. Differentiating \(\epsb = \sum_{s \in S^\star} a^\star_s\, u^{\star\top} w_s\) gives
\[
d\epsb = \sum_{s \in S^\star} a^\star_s\, u^{\star\top}\, dw_s
       + \sum_{s \in S^\star} \big(u^{\star\top} w_s\big)\, da^\star_s
       + p^{\star\top}\, du^\star,
\]
where the middle term collects to \(\epsb \sum_{s} da^\star_s = 0\) since the weights sum to one, the last term is \(\epsb\, u^{\star\top} du^\star = \tfrac{1}{2}\,\epsb\; d\lVert u^\star \rVert^2 = 0\) on the unit sphere, and the remaining term is \eqref{eq:epsgrad}.
\end{proof}

\subsection{Safety Filters}
\label{sec:prelim-cbf}

Consider the kinematic model \(\dot q = v\) on \(\Cfg \subseteq \Real^{n}\) with input \(v \in \Real^{n}\), and a continuously differentiable \(h_j \colon \Cfg \to \Real\) defining the set
\begin{equation}
\Sset_j = \{\, q \in \Cfg \mid h_j(q) \ge 0 \,\}, \qquad \Sset = \textstyle\bigcap_{j=1}^{N_b} \Sset_j.
\label{eq:safeset}
\end{equation}

\begin{definition}[Control Barrier Function, {\cite{wieland2007constructive,ames2017cbfqp}}]
\label{def:cbf}
The function \(h_j\) is a control barrier function for \(\dot q = v\) on \(\Sset_j\) if there exists an extended class-\(\mathcal{K}_\infty\) function \(\alpha\) such that for every \(q\),
\(\sup_{v} \nabla h_j(q)^{\top} v \ge -\alpha(h_j(q))\).
\end{definition}

Given \cref{def:cbf}, every locally Lipschitz controller satisfying the inequality of \cref{def:cbf} on every constraint renders \(\Sset\) forward invariant, and the minimally invasive such controller solves a quadratic program for the command nearest the nominal command, subject to the barrier constraints \cite{ames2017cbfqp,ames2019cbf}. We adopt \(\alpha(s) = \alpha_0\, s\), the standard choice for velocity-level manipulator control \cite{singletary2022kinematic}. Filtering at the velocity level extends to the full-order dynamics whenever a low-level loop tracks the commanded velocity exponentially, with a tracking-error term tightening the certified set \cite[Thm.~2]{singletary2022kinematic}.

\subsection{Convergence Filters}
\label{sec:prelim-clf}

The barrier constraints of \cref{sec:prelim-cbf} restrict the state to \(\Sset\), and our controller must also reach the grasp target. A control Lyapunov function certifies the requirement to reach the grasp target, through a decrease condition of the same pointwise form as \cref{def:cbf}.

\begin{definition}[Control Lyapunov Function, {\cite{artstein1983stabilization,sontag1989universal,ames2017cbfqp}}]
\label{def:clf}
Let \(\mathcal T \subseteq \Cfg\) be a target set and \(V \colon \Cfg \to \Realnn\) continuously differentiable off \(\mathcal T\) with \(\mathcal T = \{q \mid V(q) = 0\}\). The function \(V\) is a control Lyapunov function (CLF) for \(\dot q = v\) toward \(\mathcal T\) if there exists \(\gamma \in \mathcal K\) such that for every \(q \notin \mathcal T\),
\(\inf_{v} \nabla V(q)^{\top} v \le -\gamma(V(q))\).
\end{definition}

However, the CLF inequality bounds \(v\) from the side opposite the CBF inequality, and the two can therefore conflict. The standard resolution enforces the CBFs as hard constraints and relaxes the CLF with a slack \(\sigma \ge 0\) penalized at weight \(\eta > 0\), the combined CLF-CBF quadratic program \cite{ames2017cbfqp}. We instantiate the CLF-CBF program twice in \cref{sec:filter}, on our field and on the risk-adjusted margin (\cref{def:riskmargin}).

\section{Problem Formulation}
\label{sec:problem}

We consider an arm-hand system with configuration \(q = (q_a, q_h) \in \Cfg \subseteq \Real^{n_a + n_h}\), where \(n_a\) and \(n_h\) are the numbers of arm and hand degrees of freedom, under the single-integrator kinematic model
\begin{equation}
\dot q = v, \qquad v \in \Real^{n_a + n_h}.
\label{eq:kinematics}
\end{equation}
The scene holds an object model, a static environment, and a possibly time-varying set of obstacle bodies. Suppose further that the hand is underactuated. A set \(\mathcal{M}\) of dependent-joint pairs \((j_{\mathrm{f}}, j_{\mathrm{b}})\) with multipliers \(\nu_m\) then constrains realizable velocities to \(\ker A_m = \{\, v \mid \dot q_{j_{\mathrm{f}}} = \nu_m\, \dot q_{j_{\mathrm{b}}},\ (j_{\mathrm{f}}, j_{\mathrm{b}}) \in \mathcal{M} \,\}\), where \(j_{\mathrm{f}}\) indexes the follower joint of the pair and \(j_{\mathrm{b}}\) the lead joint.

\begin{definition}[Grasp Descriptor and Target Set]
\label{def:targetset}
A grasp descriptor is a pair \((X_{OH}, q_h^{\mathrm{g}}) \in \SEthree \times \Real^{n_h}\), the hand-root pose expressed in the object frame together with a closure hand shape. Given an object pose \(X_{WO} \in \SEthree\), a candidate pair \((\qpre{i}, \qgrasp{i}) \in \Cfg \times \Cfg\) realizes the grasp descriptor if the forward kinematics at \(\qgrasp{i}\) yield the hand-root pose \(X_{WO} X_{OH}\) with hand coordinates \(q_h^{\mathrm{g}}\). The pregrasp \(\qpre{i}\) yields the same pose, backed off a fixed Cartesian offset along the descriptor's approach axis with the fingers open. The grasp target set is \(\Gset = \{\qpre{i}\}_{i=1}^{N}\) over the realized candidates.
\end{definition}

\begin{problem}
\label{prob:main}
Given the system \eqref{eq:kinematics}, a candidate set per \cref{def:targetset}, and barrier constraints \(h_1, \dots, h_{N_b}\) covering self-collision, workspace, object, and obstacle clearance, design a stationary feedback law \(v = \kappa(q)\), evaluated pointwise with no prediction horizon, such that (i) the closed loop renders \(\Sset\) forward invariant along solutions whose velocities lie in \(\ker A_m\), (ii) away from contact the closed-loop trajectory decreases a distance to \(\Gset\), and (iii) a mode-switching structure admits fingertip contact, holds the grasp, and lifts the object, while (i) remains with the constraint set the mode switch prescribes. Conditions (i) through (iii) compose a reach-avoid-stay specification, the reach-avoid problem of \cite{fisac2015reachavoid} extended with an invariance requirement in the sense of \cite{meng2023lyapunov}.
\end{problem}

However, no smooth construction satisfies item (iii), since contact admission violates the object clearance constraints by design. We therefore resolve it with a hysteresis-based mode structure in \cref{sec:modes}.

\section{Grasp Distance Fields in Configuration Space}
\label{sec:field}
\begin{definition}[Grasp Distance Field]
\label{def:hardfield}
For a candidate set \(\Gset\) with cardinality \(N > 0\) and a diagonal metric \(\Lambda \succ 0\), the grasp distance field is the set distance
\begin{equation}
d_{\min}(q) = \min_{i} \, d_i(q),
\qquad
d_i(q) = \big\lVert \qpre{i} - q \big\rVert_{\Lambda},
\label{eq:hardfield}
\end{equation}
where \(\Lambda\) weights arm coordinates against hand coordinates.
\end{definition}

However, the field of \cref{def:hardfield}, whose zero-level set is \(\Gset\), is not a suitable control objective. The minimum is non-smooth wherever two candidates lie at the same distance, the Voronoi boundaries of \(\Gset\), and the gradient jumps across them. A law that follows this gradient therefore produces chatter precisely where the implicit selection between candidates must switch. Our controller therefore follows the negative gradient of a smooth approximation, and \cref{fig:concept} illustrates both fields and the negative-gradient trajectories on a planar example.

\begin{definition}[Softmin Grasp Distance Field]
\label{def:field}
For the field of \cref{def:hardfield} and a smoothing parameter \(\rho > 0\), the softmin grasp distance field is
\begin{equation}
d_G(q) = -\tfrac{1}{\rho}\,\log \sum_{i=1}^{N} \exp\!\big({-\rho\, d_i(q)}\big).
\label{eq:field}
\end{equation}
\end{definition}

\begin{lemma}[Unit Gradients]
\label{lem:unitgrad}
For \(q \ne \qpre{i}\), \(\nabla d_i(q) = -\Lambda(\qpre{i} - q)/d_i(q)\) and \(\lVert \nabla d_i(q) \rVert_{\Lambda^{-1}} = 1\).
\end{lemma}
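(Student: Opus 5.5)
The plan is a direct computation, using only that \(\Lambda\) is symmetric positive definite (indeed diagonal, by \cref{def:hardfield}). Write \(r_i := \qpre{i} - q\), so that \(d_i(q) = (r_i^{\top}\Lambda r_i)^{1/2}\). For \(q \ne \qpre{i}\) we have \(r_i \neq 0\), and positive definiteness then gives \(d_i(q) > 0\). Away from that point, \(d_i\) is therefore the composition of the smooth quadratic form \(g(q) = r_i^{\top}\Lambda r_i\) with the square root, which is smooth on \((0,\infty)\). Hence \(d_i\) is continuously differentiable on \(\Cfg \setminus \{\qpre{i}\}\).

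Next I compute the gradient by the chain rule. Since \(\partial r_i/\partial q = -I\) and \(\Lambda\) is symmetric, \(\nabla g(q) = -2\Lambda r_i\). The chain rule then gives
\[
\nabla d_i(q) = \frac{\nabla g(q)}{2\sqrt{g(q)}} = -\frac{\Lambda(\qpre{i} - q)}{d_i(q)},
\]
which is the first claim.

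For the norm identity, I use the definition \(\lVert x\rVert_{\Lambda^{-1}}^2 = x^{\top}\Lambda^{-1}x\). Substituting the gradient,
\[
\lVert \nabla d_i(q)\rVert_{\Lambda^{-1}}^2 = \frac{r_i^{\top}\Lambda\,\Lambda^{-1}\Lambda\, r_i}{d_i(q)^2} = \frac{r_i^{\top}\Lambda r_i}{d_i(q)^2} = 1.
\]
Taking the nonnegative square root gives the second claim.

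I expect no real obstacle here. The only points needing care are these:
\begin{itemize}
\item the exclusion \(q\neq\qpre{i}\), where \(d_i\) fails to be differentiable (it behaves like a cone at its zero);
\item the use of symmetry of \(\Lambda\) in \(\nabla(r^{\top}\Lambda r) = 2\Lambda r\);
\item the well-definedness of \(\lVert\cdot\rVert_{\Lambda^{-1}}\), which holds because \(\Lambda^{-1}\) is also symmetric positive definite.
\end{itemize}
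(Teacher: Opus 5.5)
Your proposal is correct and takes essentially the same route as the paper, which differentiates \(d_i^2 = (\qpre{i}-q)^{\top}\Lambda(\qpre{i}-q)\) and then evaluates \(\nabla d_i^{\top}\Lambda^{-1}\nabla d_i = (\qpre{i}-q)^{\top}\Lambda(\qpre{i}-q)/d_i^2 = 1\), matching your chain-rule step and your norm computation. Your remarks on differentiability away from \(\qpre{i}\) and on the symmetry of \(\Lambda\) make explicit what the paper's one-line proof leaves implicit.
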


\begin{proof}
Differentiate \(d_i^2 = (\qpre{i}-q)^{\top}\Lambda(\qpre{i}-q)\) and evaluate \(\nabla d_i^{\top} \Lambda^{-1} \nabla d_i = (\qpre{i}-q)^{\top}\Lambda(\qpre{i}-q)/d_i^2 = 1\).
\end{proof}

\begin{proposition}[Field Properties]
\label{prop:field}
On the set where every \(d_i(q) > 0\), our softmin field (\cref{def:field}) satisfies
\begin{equation}
d_{\min}(q) - \tfrac{\log N}{\rho} \;\le\; d_G(q) \;\le\; d_{\min}(q),
\label{eq:sandwich}
\end{equation}
it is smooth there, and its gradient is the convex combination
\begin{equation}
\nabla d_G(q) = \sum_{i=1}^{N} \beta_i(q)\, \nabla d_i(q),
\quad
\beta_i(q) = \frac{e^{-\rho d_i(q)}}{\sum_j e^{-\rho d_j(q)}},
\label{eq:grad}
\end{equation}
with \(\lVert \nabla d_G(q) \rVert_{\Lambda^{-1}} \le 1\).
\end{proposition}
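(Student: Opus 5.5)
The plan is to treat the three claims (the sandwich bound, smoothness with the gradient formula, and the gradient norm bound) in that order. Each follows from elementary properties of the log-sum-exp function composed with the candidate distances \(d_i\).

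\emph{Sandwich bound.} Write \(S(q)=\sum_{i=1}^N e^{-\rho d_i(q)}\) and let \(i^\star\) attain \(d_{\min}(q)\). For the upper bound, note that \(S(q)\ge e^{-\rho d_{\min}(q)}\), since all terms are positive and one of them equals \(e^{-\rho d_{\min}}\). Because \(-\tfrac1\rho\log(\cdot)\) is decreasing, this gives \(d_G(q)\le d_{\min}(q)\). For the lower bound, each term satisfies \(e^{-\rho d_i}\le e^{-\rho d_{\min}}\), so \(S(q)\le N e^{-\rho d_{\min}(q)}\). Taking \(-\tfrac1\rho\log\) then yields \(d_G(q)\ge d_{\min}(q)-\tfrac{\log N}{\rho}\). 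Neither step uses positivity of the \(d_i\), so \eqref{eq:sandwich} in fact holds everywhere.

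\emph{Smoothness and gradient formula.} On the set where every \(d_i(q)>0\), each \(d_i=\sqrt{(\qpre{i}-q)^\top\Lambda(\qpre{i}-q)}\) is the square root of a smooth, strictly positive quadratic, and is therefore \(C^\infty\). The map \(x\mapsto -\tfrac1\rho\log\sum_i e^{-\rho x_i}\) is \(C^\infty\) on \(\Real^N\), because its argument \(S\) is strictly positive. Smoothness of \(d_G\) then follows by composition. Applying the chain rule gives
\(\nabla d_G = -\tfrac{1}{\rho}\,S^{-1}\sum_i(-\rho)e^{-\rho d_i}\nabla d_i=\sum_i\beta_i\nabla d_i\),
which is \eqref{eq:grad}. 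The weights satisfy \(\beta_i>0\) and \(\sum_i\beta_i=1\) by construction, so the gradient is a convex combination.

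\emph{Gradient norm bound.} The function \(x\mapsto\lVert x\rVert_{\Lambda^{-1}}\) is a norm, because \(\Lambda^{-1}\succ0\). The triangle inequality and homogeneity therefore give \(\lVert\nabla d_G\rVert_{\Lambda^{-1}}\le\sum_i\beta_i\lVert\nabla d_i\rVert_{\Lambda^{-1}}\). By \cref{lem:unitgrad}, each \(\lVert\nabla d_i\rVert_{\Lambda^{-1}}=1\), so this sum equals \(\sum_i\beta_i=1\).

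None of these steps is a genuine obstacle. The only point requiring care is restricting to the set where every \(d_i>0\), which is needed both for differentiability of each \(d_i\) and for applying \cref{lem:unitgrad}. Away from that set, the sandwich bound survives but the gradient statements do not.
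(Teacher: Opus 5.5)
Your proof is correct and follows the paper's argument essentially step for step: you bound the sum between \(e^{-\rho d_{\min}}\) and \(N e^{-\rho d_{\min}}\) for \eqref{eq:sandwich}, use smoothness of the log-sum-exp composition away from the candidate points together with the chain rule for \eqref{eq:grad}, and apply the triangle inequality with \cref{lem:unitgrad} for the norm bound. Your extra observations, that the sandwich bound needs no positivity of the \(d_i\) and that \(\lVert\cdot\rVert_{\Lambda^{-1}}\) is a norm because \(\Lambda^{-1}\succ 0\), are also correct; they make explicit what the paper leaves implicit.
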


\begin{proof}
The sum in \eqref{eq:field} lies between \(e^{-\rho d_{\min}}\) and \(N e^{-\rho d_{\min}}\), which gives \eqref{eq:sandwich} after taking logarithms. Away from the candidate points, each \(d_i\) is smooth, the log-sum-exp composition preserves smoothness, and \eqref{eq:grad} follows by the chain rule. By \cref{lem:unitgrad}, each term has \(\Lambda^{-1}\)-norm one, and a convex combination of vectors of \(\Lambda^{-1}\)-norm one has \(\Lambda^{-1}\)-norm at most one.
\end{proof}
\begin{figure*}[t]
\centering
\includegraphics[width=.32\textwidth]{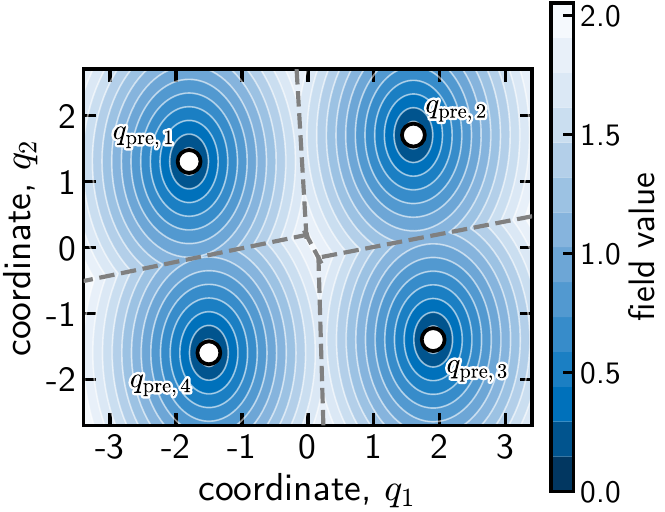}\hfill
\includegraphics[width=.32\textwidth]{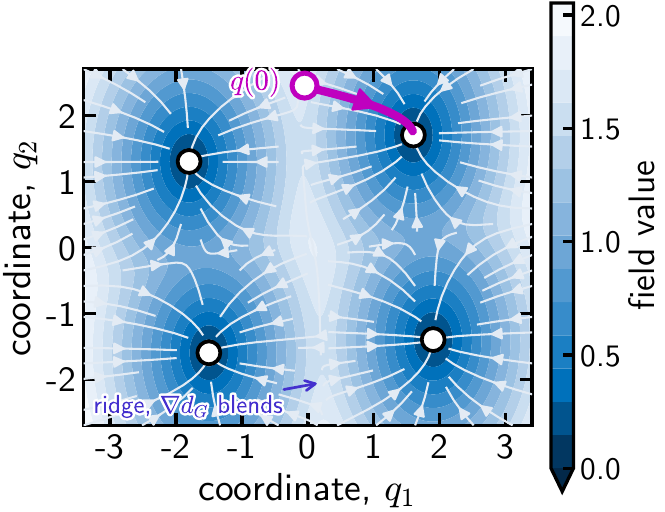}\hfill
\includegraphics[width=.32\textwidth]{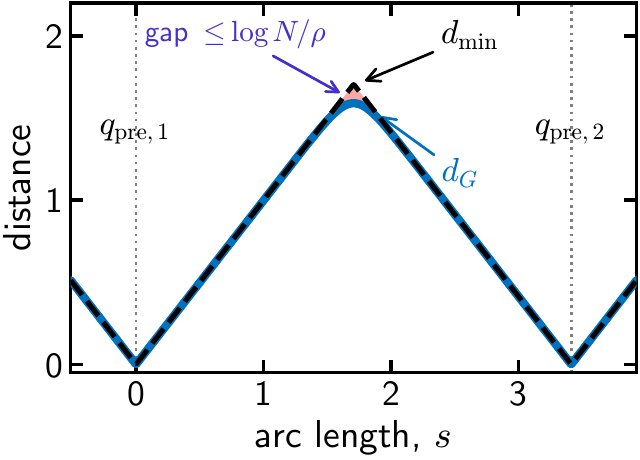}
\caption{\textbf{The Grasp Distance Field on a Planar Toy System.} We evaluate the construction on a planar two-candidate system. Our hard field (\cref{def:hardfield}) is non-smooth along the Voronoi ridges of the candidate set (left). Our softmin field (\cref{def:field}) is smooth across these ridges and preserves each minimum, and its negative-gradient trajectories converge to whichever candidate lies nearest, the implicit selection of \eqref{eq:grad} (middle). A section through two candidates shows the softmin tracking the hard minimum within the \(\log N/\rho\) gap of \eqref{eq:sandwich} (right).}
\label{fig:concept}
\end{figure*}
Given \cref{prop:field}, the nominal command follows the field's negative gradient,
\begin{equation}
v_{\mathrm{nom}}(q) = -k\,\nabla d_G(q), \qquad k > 0,
\label{eq:nominal}
\end{equation}
so \(\lVert v_{\mathrm{nom}} \rVert_{\Lambda^{-1}} \le k\) everywhere by \cref{prop:field}. No selection step comes before \eqref{eq:nominal}. The softmax weights \(\beta_i\) of \eqref{eq:grad} are the selection, revised at every evaluation, and the mode switch of \cref{sec:modes} reads the dominant index only for a closure target.

\begin{proposition}[Unconstrained Decrease]
\label{prop:descent}
Along solutions of \(\dot q = v_{\mathrm{nom}}(q)\) on the set where \(d_G\) is smooth,
\(\tfrac{d}{dt}\, d_G(q(t)) = -k\, \lVert \nabla d_G(q(t)) \rVert^2 \le 0\),
with equality exactly at stationary points of the field.
\end{proposition}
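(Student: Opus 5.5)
The plan is a direct chain-rule computation on the set where \(d_G\) is smooth. That set is the one identified in \cref{prop:field}, namely where every \(d_i(q) > 0\), i.e.\ \(q \notin \Gset\).

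\emph{Well-posedness.} On this open set, \(d_G\) is smooth by \cref{prop:field}. Hence \(v_{\mathrm{nom}} = -k\,\nabla d_G\) of \eqref{eq:nominal} is smooth, and in particular locally Lipschitz. Solutions of \(\dot q = v_{\mathrm{nom}}(q)\) therefore exist and are unique, and they are continuously differentiable for as long as they remain in the set. This justifies differentiating \(t \mapsto d_G(q(t))\) along them.

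\emph{Derivative.} The chain rule gives
\[
\tfrac{d}{dt}\, d_G(q(t)) = \nabla d_G(q(t))^{\top} \dot q(t) = \nabla d_G(q(t))^{\top}\big(-k\,\nabla d_G(q(t))\big) = -k\,\lVert \nabla d_G(q(t)) \rVert^2 .
\]
Since \(k > 0\) and the squared Euclidean norm is nonnegative, the right-hand side is \(\le 0\). Equality holds exactly when \(\nabla d_G(q(t)) = 0\), which is the definition of a stationary point of the field.

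The one point needing care is the choice of norm. The statement uses the unweighted norm, because the nominal law is a Euclidean gradient flow, so no \(\Lambda\) appears in the identity. The \(\Lambda^{-1}\)-norm bound of \cref{prop:field} is used only for boundedness of \(v_{\mathrm{nom}}\), not here. I would also remark, without proof, that stationary points need not lie in \(\Gset\). By \eqref{eq:grad}, \(\nabla d_G\) can vanish where the softmax-weighted unit gradients cancel, for example at the midpoint between two equidistant candidates. So the proposition asserts only monotone non-increase, not convergence to \(\Gset\). Beyond this caveat I expect no real obstacle: the argument is the routine gradient-flow identity once smoothness is imported from \cref{prop:field}.
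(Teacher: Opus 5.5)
Your proposal is correct and takes the same approach as the paper, whose proof is just the chain rule applied to \(\dot q = -k\,\nabla d_G(q)\) from \eqref{eq:nominal}. Your additions are sound refinements the paper leaves implicit: the well-posedness argument on the complement of \(\Gset\), the observation that the identity uses the Euclidean norm rather than the \(\Lambda^{-1}\)-norm, and the midpoint example of a stationary point, which is consistent with \cref{rem:fieldclf}.
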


\begin{proof}
Chain rule with \eqref{eq:nominal}.
\end{proof}

\begin{remark}[The Field Is a CLF]
\label{rem:fieldclf}
By \cref{prop:descent}, \(d_G\) is a control Lyapunov function per \cref{def:clf} toward \(\Gset\) on every compact set where \(\lVert \nabla d_G \rVert_{\Lambda^{-1}} \ge g_0 > 0\), with a linear rate determined by \(g_0\) and the size of the set. The property fails at the stationary points of the softmin between competing candidates, which we report as trapped equilibria of the filtered closed loop in \cref{sec:discussion}. When the barrier constraints of \cref{sec:filter} act against the negative gradient, the filtered closed loop can stop short of \(\Gset\) with no record of the impeded progress. In \cref{sec:filter}, we therefore restate the same inequality as an explicit slack-bearing constraint.
\end{remark}

\section{Safe Execution with Guaranteed Quality}
\label{sec:filter}

\subsection{The Filtered Program}
\label{sec:filter-qp}

The filtered command solves a single quadratic program per control step,
\begin{equation}
\begin{aligned}
v^\star(q) = \argmin_{v,\,\sigma \succeq 0} & \tfrac12 \lVert v - v_{\mathrm{nom}}(q) \rVert^2 + \eta \textstyle\sum_{r} \sigma_r^{2}\!\!\\
\text{s.t.}\; & \nabla h_j(q)^{\!\top} v \ge -\alpha_0 h_j(q), \, j = 1, \dots, N_b,\\
& \nabla h_{\mathrm{wq}}(q)^{\!\top} v \ge -\alpha_0 h_{\mathrm{wq}}(q) \text{ in }\textsc{h},\textsc{l},\\
& \nabla V_r(q)^{\!\top} v \le -\gamma_r(V_r(q)) + \sigma_r, \, r \in \mathcal R(p),\\
& A_m v = 0, \;\; \lvert v \rvert \le \bar v,
\end{aligned}
\label{eq:qp}
\end{equation}
where \(\bar v\) collects the per-joint velocity limits, \(A_m\) encodes the dependent-joint constraints of \cref{sec:problem}, \(\sigma\) collects the slack variables \(\sigma_r\), and \(\mathcal R(p)\) is the set of control Lyapunov constraints supplied by the active mode \(p \in \mathcal P\) of \cref{sec:modes}, as in
\begin{equation}\label{eq:clfexpa}
V_r(q) \;=\;
\begin{cases}
d_G(q), & p = \textsc{r},\\[1pt]
\tfrac12 \big\lVert q_f - \qgrasp{f,i^\star} \big\rVert_{\Lambda_f}^2, & p \in \{\textsc{c}, \textsc{h}\},\; r = f.
\end{cases}
\end{equation}
In \textsc{reach}, the set \(\mathcal R(\textsc{r})\) holds the field constraint alone, while in \textsc{close} and \textsc{hold} the index \(f\) ranges over the five fingers, one constraint per finger.
Each CLF in \eqref{eq:clfexpa} has a class-\(\mathcal K\) rate \(\gamma_r\) and its own slack, \(\sigma_r\). In \(\mathcal R(\textsc{h})\), we exclude each finger whose \(c_f\) = 1 (a finger in contact, per \eqref{eq:hold}), and we restrict the per-finger diagonal block \(\Lambda_f\) to the corresponding finger's coordinates. Our wrench-quality constraint is the CBF \(h_{\mathrm{wq}}(q) := \epsb(q) - \big(\epsb(q_0) - k_{\mathrm{wq}}\big)\), built on \cref{def:riskmargin} and enforced from the moment our mode switch admits a grasp. Here \(q_0 := q(t_0)\) is the configuration at the moment of admission, and \(k_{\mathrm{wq}}\) bounds the decay we allow from it. We state the bound relative to \(\epsb(q_0)\). A bound fixed at the synthesis threshold \(k_{\mathrm{adm}}\) would be active on those grasps from the first control step, since at the confidence \(\beta\) of \cref{sec:setup}, the risk-adjusted margin is negative over much of the descriptor set at the risk-adjusted friction.

Specifically, the collection \(\{h_j\}\) comprises four CBF families. Self-collision constraints bound the witness-point distance of every geometry pair the collision model leaves active, with gradients through the translational point Jacobians of the two nearest points. Workspace constraints bound every hand-assembly collision sphere inside the half-spaces of the static scene. Object constraints hold the same spheres clear of the target object during approach, with the active subset prescribed by the mode switch of \cref{sec:modes}. Obstacle constraints, their barriers denoted \(h_{\mathrm{obs}}\), encode box restricted regions for the hand assembly and the arm links in every mode, with obstacle poses updated at every step. Given these families, the invariance argument of \cref{thm:safe} relies on regularity along the closed loop, which \cref{ass:qp} collects.

\begin{assumption}[QP Regularity]
\label{ass:qp}
Along the closed-loop solution, each \(h_j\) is continuously differentiable in a neighborhood of the solution, each active \(V_r\) is continuously differentiable off its target set, \cref{ass:lp} holds whenever \(h_{\mathrm{wq}}\) is active, \eqref{eq:qp} is feasible, and \(v^\star(\cdot)\) is locally Lipschitz.
\end{assumption}

For a CLF-CBF program whose vector fields, barrier and Lyapunov gradients, and cost terms are locally Lipschitz, and whose barrier has relative degree one, \cite[Thm.~3]{ames2017cbfqp} establishes that the solution is locally Lipschitz. Our program \eqref{eq:qp} adds the input bounds \(\lvert v \rvert \le \bar v\) and the equality block \(A_m v = 0\), a case for which \cite{ames2017cbfqp} reports no such guarantee, and \cref{ass:qp} therefore posits the regularity in place of deriving it. We report feasibility empirically at every step of every trial in \cref{sec:experiments}.

\begin{theorem}[Safe Execution]
\label{thm:safe}
Under \cref{ass:qp}, the closed loop \(\dot q = v^\star(q)\) satisfies, for every hard constraint \(h_j\) or \(h_{\mathrm{wq}}\) and every \(t \ge t_0\) in the maximal interval of existence, writing \(t_0\) for the time the constraint enters the active set,
\begin{equation}
h(q(t)) \;\ge\; h(q(t_0))\, e^{-\alpha_0 (t-t_0)}.
\label{eq:hbound}
\end{equation}
In particular, if \(h(q(t_0)) \ge 0\) then the constraint stays nonnegative for all such \(t\), and a constraint with \(h(q(t_0)) < 0\) approaches its boundary at least exponentially from below.
\end{theorem}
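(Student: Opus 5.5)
The argument is a scalar comparison lemma applied to each active constraint along the closed-loop solution. Fix one hard constraint \(h\), either some \(h_j\) or \(h_{\mathrm{wq}}\), and let \(t_0\) be the time it enters the active set. By \cref{ass:qp}, \(v^\star(\cdot)\) is locally Lipschitz, so \(\dot q = v^\star(q)\) has a unique \(C^1\) solution on a maximal interval \([t_0, T)\). Also by \cref{ass:qp}, \(h\) is continuously differentiable in a neighborhood of that solution. For \(h_{\mathrm{wq}}\), this uses \cref{ass:lp} together with \cref{lem:epsgrad}, noting that \(\epsb(q_0)\) and \(k_{\mathrm{wq}}\) are constants after admission. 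Hence \(\phi(t) := h(q(t))\) is \(C^1\) on \([t_0,T)\), and by the chain rule \(\dot\phi(t) = \nabla h(q(t))^{\top} v^\star(q(t))\).

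The second step turns the QP constraint into a differential inequality. Because \eqref{eq:qp} is feasible at every step (\cref{ass:qp}), the minimizer \(v^\star(q)\) satisfies every hard constraint row. In particular, \(\nabla h(q)^{\top} v^\star(q) \ge -\alpha_0 h(q)\) holds for all \(t\) in which \(h\) remains in the active set. This gives \(\dot\phi \ge -\alpha_0 \phi\). The slack variables \(\sigma_r\) enter only the CLF rows, so they cannot relax this inequality. I would state this point explicitly, since it is what separates the hard barrier guarantee from the soft convergence guarantee.

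The third step integrates. Set \(\psi(t) = e^{\alpha_0 (t-t_0)}\phi(t)\). Then \(\dot\psi = e^{\alpha_0(t-t_0)}(\dot\phi + \alpha_0\phi) \ge 0\), so \(\psi(t) \ge \psi(t_0) = \phi(t_0)\), which is \eqref{eq:hbound}. The two consequences follow directly. If \(h(q(t_0)) \ge 0\), the right-hand side of \eqref{eq:hbound} is nonnegative. If \(h(q(t_0)) < 0\), the lower bound \(h(q(t_0))e^{-\alpha_0(t-t_0)}\) increases to \(0\) exponentially, so any violation is bounded by a quantity that decays at least exponentially.

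The main obstacle is the meaning of "active set" under mode switching. The object constraints and \(h_{\mathrm{wq}}\) enter or leave the constraint set at the transitions of \cref{sec:modes}, so the closed loop is piecewise defined. I would handle this by restricting each constraint's argument to the interval on which it stays in the active set. On that interval the vector field \(v^\star\) may change at mode switches, but the solution \(q(t)\) is still continuous, and \(\phi\) is absolutely continuous, with the inequality \(\dot\phi \ge -\alpha_0\phi\) holding almost everywhere. The integrating-factor argument goes through unchanged for absolutely continuous functions. I would also note that the constraints \(h_j\) present in every mode, such as self-collision, workspace, and obstacle, have \(t_0\) equal to the initial time. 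For obstacles with time-varying poses, I would read the statement as assuming the obstacle motion is already accounted for in \(\nabla h_j\), as \cref{ass:qp} implicitly posits.
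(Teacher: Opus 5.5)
Your proposal is correct and follows the paper's proof. Feasibility of \eqref{eq:qp} puts every hard barrier row on \(v^\star\), which gives \(\tfrac{d}{dt}h(q(t)) \ge -\alpha_0 h(q(t))\) with the CLF slacks never entering, and the comparison lemma (your integrating factor) applied from the activation time \(t_0\) yields \eqref{eq:hbound}; your restriction to the interval on which the constraint stays active also matches the paper's remark after the theorem, which applies the bound between mode events with \(t_0\) the most recent activation. On moving obstacles, note that the paper does not fold the motion into \(\nabla h_j\) as you suggest but simply omits \(\partial h/\partial t\), so the theorem covers only time-invariant \(h(q)\), and \cref{sec:dynamic} reports the resulting violation, bounded by the obstacle speed over \(\alpha_0\).
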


\begin{proof}
Every feasible point of \eqref{eq:qp} satisfies \(\tfrac{d}{dt} h(q(t)) \ge -\alpha_0 h(q(t))\) along the closed loop for every hard constraint, the soft constraints of \eqref{eq:qp} never appear in this inequality, and the comparison lemma yields \eqref{eq:hbound}, the standard CBF-QP invariance argument \cite{ames2017cbfqp} applied beginning from whichever time step the constraint becomes active.
\end{proof}

In particular, for the barrier constraints present from \(t=0\), \(t_0 = 0\) recovers the invariance statement of \(\Sset\) in \eqref{eq:safeset} directly, if \(q(0) \in \Sset\) then \(q(t) \in \Sset\) for all \(t\) in the maximal interval. The mode switch of \cref{sec:modes} changes the active constraint set only at isolated events, the hysteresis band and the two duration limits \(\tau_{\textsc{h}}^{+}, \tau_{\textsc{h}}^{-}\) of \cref{rem:hysteresis} exclude accumulation. The argument therefore applies on each interval between events with \(t_0\) the most recent activation, and \eqref{eq:hbound} extends across them for constraints that remain active.

\begin{remark}[Nonsmooth Constraints]
\label{rem:nonsmooth}
Witness points on convex hulls jump at support switches, the nearest facet of \cref{lem:epsgrad} jumps likewise, and both therefore violate the smoothness of \cref{ass:qp} on a measure-zero set. Input-to-state safety covers this remainder by tightening the certified set under the violation, so invariance holds for the tightened set in place of the original \cite{singletary2022kinematic,kolathaya2019issf}.
\end{remark}

\begin{corollary}[Safety Under Convergence Constraints]
\label{cor:safeclf}
\Cref{thm:safe} holds regardless of whether the soft constraints of \eqref{eq:qp} are present. The comparison-lemma argument in its proof uses only the hard constraints, adding CLF constraints moves the solution further from \(v_{\mathrm{nom}}\) while leaving every barrier constraint at its original bound, and the slack variables \(\sigma_r\) absorb the resulting conflict without entering \eqref{eq:hbound}.
\end{corollary}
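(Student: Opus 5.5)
The plan is to reduce the corollary to the structure of the proof of \cref{thm:safe}, and to show that adding or removing the soft constraints cannot break any step of that argument. I will define two programs. The first is \eqref{eq:qp} with the full set \(\mathcal R(p)\) of control Lyapunov constraints. The second is the same program with \(\mathcal R(p)=\emptyset\), that is, with no \(V_r\) rows and no slack variables. In both, the hard rows are identical:
\begin{itemize}
\item the barrier rows \(\nabla h_j^{\top}v \ge -\alpha_0 h_j\);
\item the wrench-quality row in \textsc{h} and \textsc{l};
\item the equality block \(A_m v = 0\) and the bounds \(\lvert v\rvert\le\bar v\).
\end{itemize}
The key observation is that the feasible set of the full program, projected onto \(v\), is exactly the set of \(v\) satisfying the hard rows. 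The reason is that for any such \(v\), choosing \(\sigma_r \ge \max\{0,\ \nabla V_r^{\top}v + \gamma_r(V_r)\}\) satisfies every soft row. Hence the soft rows never exclude any \(v\) that the hard rows admit.

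I will then take the minimizer \(v^\star(q)\) of either program. Because it is feasible, it satisfies \(\nabla h(q)^{\top} v^\star(q) \ge -\alpha_0 h(q)\) for every active hard constraint \(h\). Under \cref{ass:qp}, with feasibility and local Lipschitzness posited for whichever program is run, the closed loop \(\dot q = v^\star(q)\) has a unique solution on its maximal interval. Along that solution, \(\tfrac{d}{dt}h(q(t)) = \nabla h^{\top} v^\star \ge -\alpha_0 h(q(t))\). The comparison lemma, applied from the activation time \(t_0\) of each constraint, gives \eqref{eq:hbound} word for word as in \cref{thm:safe}. The event-wise extension across mode switches also carries over, since it depends only on activation times and not on \(\mathcal R(p)\).

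I will add a short remark that the cost modification \(\eta\sum_r\sigma_r^2\) and the soft rows only change which hard-feasible \(v\) is selected. They may move \(v^\star\) away from \(v_{\mathrm{nom}}\), but no barrier bound \(-\alpha_0 h_j\) is altered. The slack \(\sigma_r\) appears only in the \(V_r\) rows and never in the differential inequality for \(h\).

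The main obstacle is that \cref{ass:qp} must genuinely hold for the program actually used, and the corollary implicitly ranges over both programs. Feasibility transfers immediately through the projection argument above: the full program is feasible if and only if the hard subsystem is. Local Lipschitzness of \(v^\star\), however, can differ between the two programs, because the argmin changes. I would handle this by stating that \cref{ass:qp} is understood for whichever program is run, which is how the theorem is already phrased. This is consistent with the paper positing rather than deriving regularity.
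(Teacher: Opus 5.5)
Your proposal is correct and takes essentially the paper's approach: the paper justifies the corollary in one sentence, namely that the comparison-lemma argument of \cref{thm:safe} uses only the hard rows of \eqref{eq:qp}, and the soft rows leave those rows untouched. Your observation that the projection of the feasible set onto \(v\) equals the hard-feasible set (by taking \(\sigma_r\) large) is the precise version of the paper's claim that the slack absorbs the conflict, and it also shows that feasibility transfers between the two programs, whereas local Lipschitzness in \cref{ass:qp} must still be assumed separately for whichever program is run, as you note.
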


\begin{proposition}[Quality Forward Invariance]
\label{prop:qualityinv}
Under \cref{ass:lp}, with \(q_0 := q(t_0)\) the configuration at the time \(h_{\mathrm{wq}}\) enters the active constraint set, \(\epsb(q(t)) \ge \epsb(q_0) - k_{\mathrm{wq}}\) for every \(t \ge t_0\) in the maximal interval of existence. The risk-adjusted margin, in other words, never falls more than \(k_{\mathrm{wq}}\) below its value at hold onset.
\end{proposition}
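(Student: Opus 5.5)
The plan is to treat this as a direct specialization of \cref{thm:safe} to the single hard constraint \(h_{\mathrm{wq}}(q) = \epsb(q) - (\epsb(q_0) - k_{\mathrm{wq}})\). The threshold \(\epsb(q_0) - k_{\mathrm{wq}}\) is a constant, fixed once at the admission time \(t_0\). Hence \(\nabla h_{\mathrm{wq}} = \nabla \epsb\) for \(t \ge t_0\). Under \cref{ass:lp}, and with \(\epsb > 0\) where needed, this gradient exists and is given in closed form by \cref{lem:epsgrad}. The constraint row of \eqref{eq:qp} is therefore well defined and continuously differentiable along the trajectory.

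The key steps, in order, are as follows.
\begin{enumerate}[label=(\alph*)]
\item Evaluate the barrier at activation. Since \(k_{\mathrm{wq}} \ge 0\), we get \(h_{\mathrm{wq}}(q_0) = k_{\mathrm{wq}} \ge 0\), so the initial condition is inside the superlevel set.
\item Use feasibility of the solution of \eqref{eq:qp} (part of \cref{ass:qp}) in modes \textsc{h}, \textsc{l}. Along \(\dot q = v^\star(q)\) this gives
\[
\tfrac{d}{dt} h_{\mathrm{wq}}(q(t)) = \nabla \epsb(q)^{\top} v^\star(q) \ge -\alpha_0 h_{\mathrm{wq}}(q(t)).
\]
Local Lipschitzness of \(v^\star\) guarantees a unique absolutely continuous solution on the maximal interval, so this holds almost everywhere.
\item Apply the comparison lemma, exactly as in the proof of \cref{thm:safe}. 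This yields \(h_{\mathrm{wq}}(q(t)) \ge k_{\mathrm{wq}} e^{-\alpha_0 (t-t_0)} \ge 0\).
\item Rearrange to obtain \(\epsb(q(t)) \ge \epsb(q_0) - k_{\mathrm{wq}}\).
\end{enumerate}
By \cref{cor:safeclf}, the CLF rows and slacks play no role. The conclusion therefore holds even though the per-finger Lyapunov constraints are active in \textsc{hold}.

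The main obstacle is regularity of \(\epsb\), which is needed for the chain rule in step (b). \cref{lem:epsgrad} gives differentiability only where the nearest facet is unique and \(\epsb > 0\). My first approach is to invoke \cref{ass:lp} at every \(q\) along the trajectory, as the statement does, together with the clause of \cref{ass:qp} requiring it whenever \(h_{\mathrm{wq}}\) is active. This makes \(\epsb\) continuously differentiable in a neighborhood of the trajectory. When \(\epsb\) is negative, the signed origin-to-facet distance of \cref{def:riskmargin} has the same facet-normal structure, so the argument of \cref{lem:epsgrad} carries over verbatim.

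If facet switches occur on a measure-zero set of times, I would fall back on a weaker but sufficient fact. \(\epsb\) is a pointwise min over facets of smooth functions, hence locally Lipschitz, so \(t \mapsto h_{\mathrm{wq}}(q(t))\) is absolutely continuous. The differential inequality then holds almost everywhere, which suffices for the comparison lemma. Otherwise one defers to the input-to-state-safety tightening of \cref{rem:nonsmooth}.

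Finally, the mode switch only adds constraints at isolated events. \(h_{\mathrm{wq}}\) stays active through \textsc{h} and \textsc{l}, so a single interval starting at \(t_0\) covers the claim.
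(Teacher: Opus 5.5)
Your proposal is correct and is essentially the paper's proof: the paper likewise observes \(h_{\mathrm{wq}}(q_0)=k_{\mathrm{wq}}\ge 0\), applies \cref{thm:safe} to \(h_{\mathrm{wq}}\) from \(t_0\), and rearranges \(h_{\mathrm{wq}}\ge 0\), leaving the regularity of \(\epsb\) to the assumptions without discussing it further (\cref{thm:safe} is the same comparison-lemma argument and tacitly needs \cref{ass:qp}, as your step (b) makes explicit). One small caveat on your final paragraph: because \((\textsc{h},\textsc{c})\in\Gamma\), a return to \textsc{close} removes \(h_{\mathrm{wq}}\) from \eqref{eq:qp} and re-entry into \textsc{hold} resets \(q_0\), so the bound holds only while \(h_{\mathrm{wq}}\) stays active, with \(t_0\) the most recent activation, which is also how the paper reads the statement.
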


\begin{proof}
By construction, \(h_{\mathrm{wq}}(q(t_0)) = k_{\mathrm{wq}} \ge 0\), and \cref{thm:safe} applied to \(h_{\mathrm{wq}}\) at \(t_0\) gives \(h_{\mathrm{wq}}(q(t)) \ge h_{\mathrm{wq}}(q(t_0))\, e^{-\alpha_0(t-t_0)} \ge 0\), and \(h_{\mathrm{wq}}(q) \ge 0\) is \(\epsb(q) \ge \epsb(q_0) - k_{\mathrm{wq}}\) by definition.
\end{proof}

Beyond the quality guarantee, the equality block exists because a rigid-body library treats kinematically coupled joints as independent revolute joints. Without \(A_m\), the filtered command would include finger-joint velocities the hardware cannot realize.

\subsection{Contact and Mode Structure}
\label{sec:modes}

Contact admission violates the object constraints by design, and our controller therefore handles first contact through a hybrid structure. We follow the five-tuple hybrid model of \cite{olkin2025chasingstability}, \(\mathcal H = (\mathcal D, \mathcal S, \Gamma, \Delta, \mathcal F)\), collecting the domains, the guards, the allowed transitions, the reset maps, and the per-domain dynamics.

Specifically, the mode index set \(\mathcal{P} = \{\textsc{r}, \textsc{c}, \textsc{h}, \textsc{l}\}\), abbreviating \textsc{reach}, \textsc{close}, \textsc{hold}, and \textsc{lift} wherever a subscript needs a single letter, indexes the domains \(D_p\) of \(\mathcal D\). Every mode shares the same domain, the extended state space \(\mathcal X\) defined below, and the modes differ in the pair \((\Theta_p, v^{\mathrm{nom}}_p)\) of an active object-constraint subset and a nominal command. A per-finger contact indicator \(c \in \{0,1\}^{n_f}\), where \(n_f\) is the number of fingers, encodes the contact state that every guard below depends on. The transition set is \(\Gamma = \{(\textsc{r}, \textsc{c}), (\textsc{c}, \textsc{h}), (\textsc{h}, \textsc{c}), (\textsc{h}, \textsc{l})\}\), carrying the same index as the guards of \eqref{eq:guard}. The indicator \(c\) has no dynamics of its own, \eqref{eq:guard} evaluates it directly from the current tip clearance of each finger, but two of the guards below depend on whether \(c\) has remained on one side of a threshold for a duration, a condition no subset of \(\Cfg\) alone can encode. Two auxiliary states therefore extend the space these two guards range over, the standard construction for guard conditions that depend on elapsed time. The state \(\theta^{+}\) accumulates while \(\mathbf{1}^{\top} c \ge N^{+}\) holds and returns to \(0\) the instant it fails, and \(\theta^{-}\) accumulates likewise while \(\mathbf{1}^{\top} c < N^{-}\) holds. Thus, we define the guards as subsets of the extended state space
\(\mathcal X := \Cfg \times \{0,1\}^{n_f} \times \Real_{\ge 0}^{2}\),
whose points \(x = (q, c, \theta^{+}, \theta^{-})\) contain the configuration,
the contact indicator, and the two duration states, as in
\begin{equation}
\begin{aligned}
\mathcal S_{(\textsc{r},\textsc{c})} &= \{\, x \in \mathcal X \mid d_G(q) \le \delta_{\mathrm{pre}} \,\},\\
\mathcal S_{(\textsc{c},\textsc{h})} &= \{\, x \in \mathcal X \mid \mathbf{1}^{\top} c \ge N^{+} \,\},\\
\mathcal S_{(\textsc{h},\textsc{c})} &= \{\, x \in \mathcal X \mid \mathbf{1}^{\top} c < N^{-},\ \theta^{-} \ge \tau_{\textsc{h}}^{-} \,\},\\
\mathcal S_{(\textsc{h},\textsc{l})} &= \{\, x \in \mathcal X \mid \mathbf{1}^{\top} c \ge N^{+},\ \theta^{+} \ge \tau_{\textsc{h}}^{+} \,\},
\end{aligned}
\label{eq:guard}
\end{equation}
with hysteresis \(N^{-} < N^{+}\) and \textsc{lift} terminal. A guard set alone does not give the evolution of \(\theta^{+}\) and \(\theta^{-}\). Our \(\mathcal F\) holds \(\dot q = v\) together with \(\dot\theta^{+} = \mathbb{I}\big[\mathbf{1}^{\top} c \ge N^{+}\big]\) and \(\dot\theta^{-} = \mathbb{I}\big[\mathbf{1}^{\top} c < N^{-}\big]\), each duration state resetting to \(0\) when its indicator vanishes. The sampled implementation realizes both through the per-step update in \textsc{hold},
\begin{align}
\theta^{+} &\leftarrow \big(\theta^{+} + \Delta t\big)\, \mathbb{I}\big[\mathbf{1}^{\top} c \ge N^{+}\big],
\\
\theta^{-} &\leftarrow \big(\theta^{-} + \Delta t\big)\, \mathbb{I}\big[\mathbf{1}^{\top} c < N^{-}\big],
\label{eq:clocks}
\end{align}
with \(\Delta t\) the control step, both states \(0\) outside \textsc{hold}. The continuous dynamics are otherwise the kinematic model \eqref{eq:kinematics} on every domain, and \eqref{eq:qp} returns the input velocity over the constraint set \(\mathcal R(p)\). Every reset map in \(\Gamma\) is the identity on the configuration \(q\), and both states return to zero on the transitions into and out of \textsc{hold}. \cref{fig:modelogic} graphs several frames of a representative trial alongside the mode band and the guards of \eqref{eq:guard}.

\begin{figure*}[t]
\centering
\setlength{\tabcolsep}{1.5pt}%
\setlength{\fboxsep}{0pt}\setlength{\fboxrule}{0.8pt}%
\setlength{\panelw}{\dimexpr(\textwidth-8\tabcolsep-10\fboxrule)/5\relax}%
\begin{tabular}{@{}ccccc@{}}
\fbox{\includegraphics[width=\panelw]{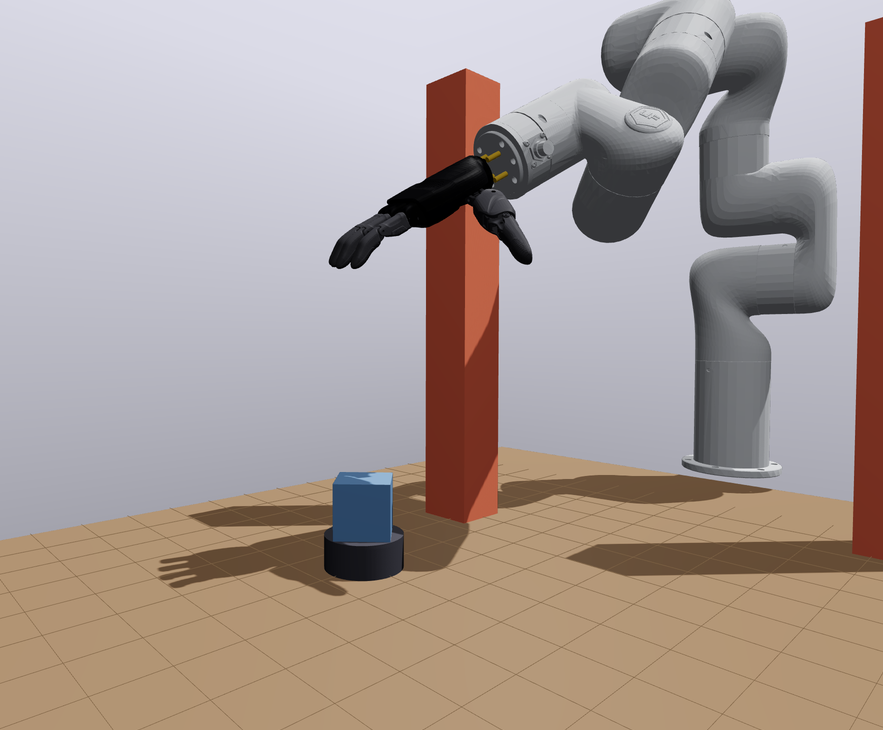}} &
\fbox{\includegraphics[width=\panelw]{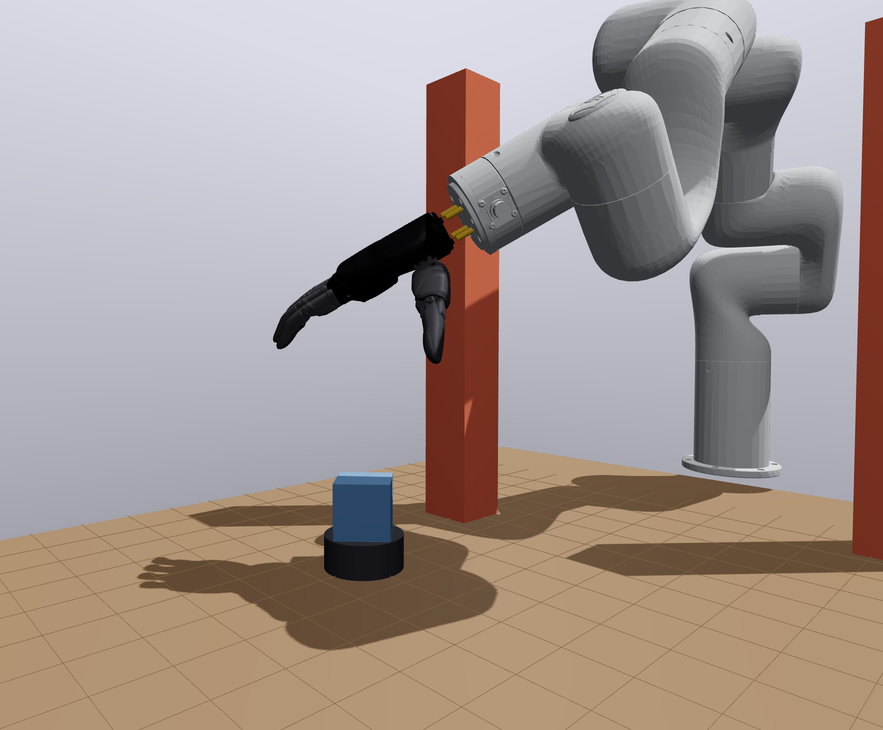}} &
\fbox{\includegraphics[width=\panelw]{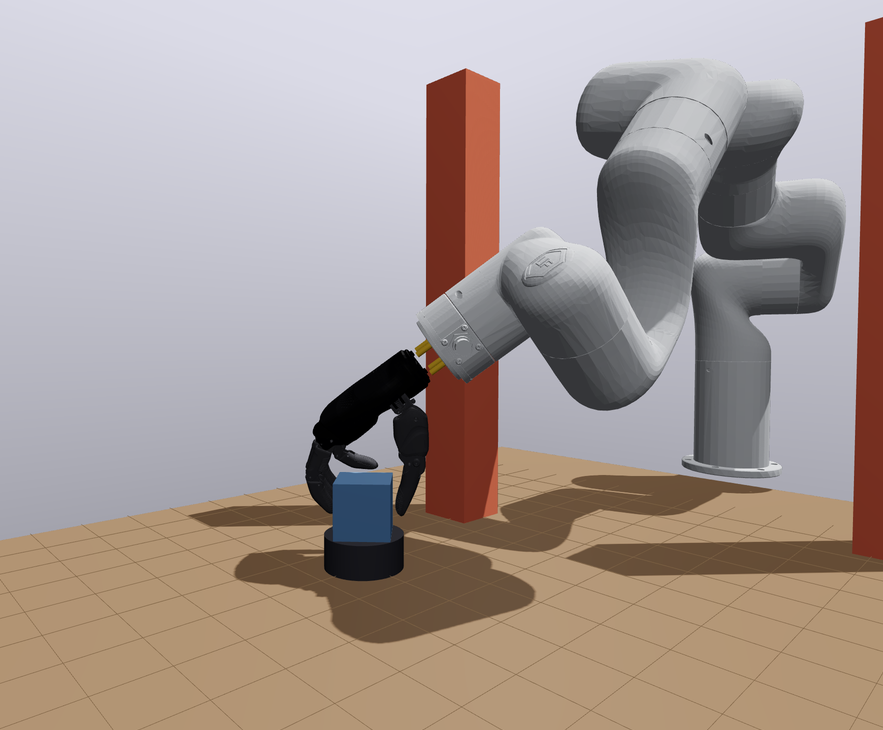}} &
\fbox{\includegraphics[width=\panelw]{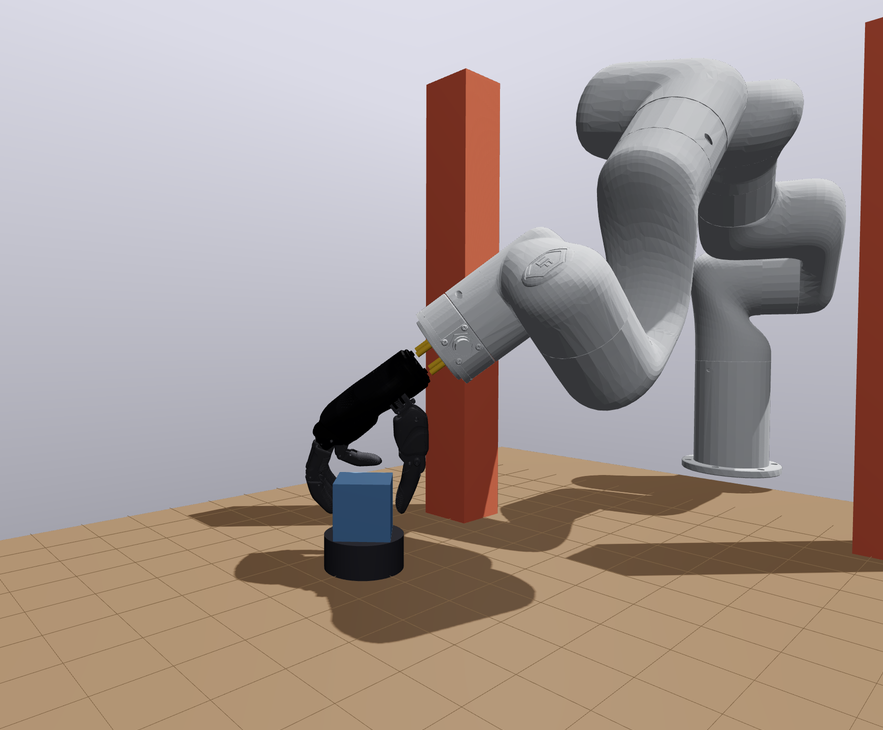}} &
\fbox{\includegraphics[width=\panelw]{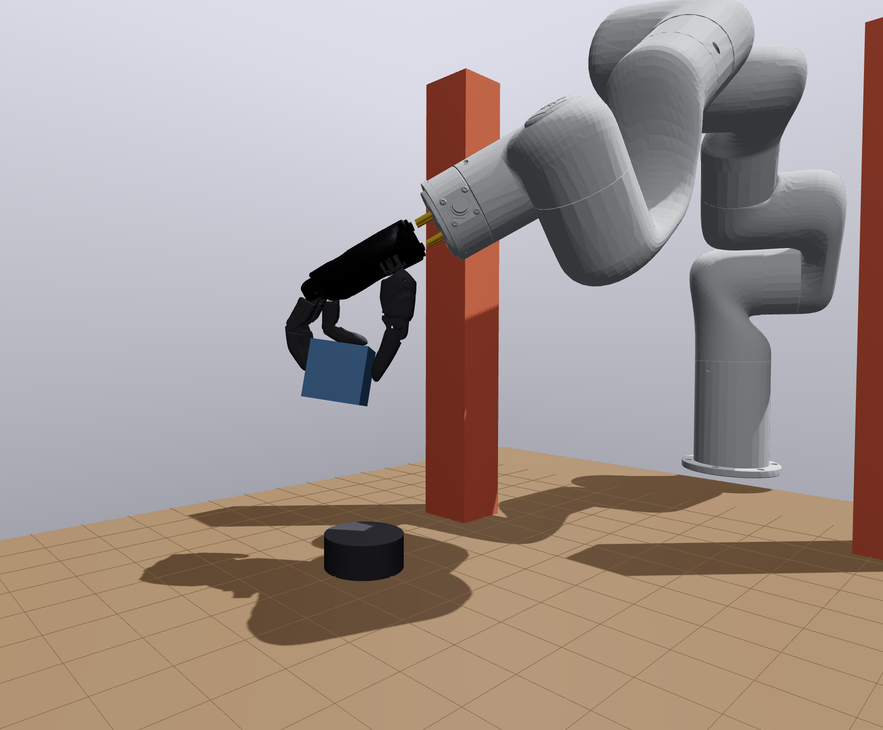}}\\
{\scriptsize \textsc{reach}, step 60} & {\scriptsize \textsc{close} at step 111} & {\scriptsize \textsc{hold} at step 302} & {\scriptsize \textsc{lift} at step 352} & {\scriptsize lift complete, step 461}
\end{tabular}

\vspace{0.5em}
\resizebox{\textwidth}{!}{%
\tikzstyle{mode}=[draw, rounded corners, fill=oursmint, font=\small,
  minimum height=0.7cm, minimum width=2.0cm, align=center]%
\tikzstyle{guard}=[font=\scriptsize, align=center]%
\begin{tikzpicture}[>=stealth]
\node[mode] (r) at (0,0) {\textsc{reach}};
\node[mode] (c) at (4.6,0) {\textsc{close}};
\node[mode] (h) at (9.2,0) {\textsc{hold}};
\node[mode] (l) at (13.8,0) {\textsc{lift}};
\draw[->] (r) -- (c) node[guard, midway, above] {\(d_G \le \delta_{\mathrm{pre}}\)};
\draw[->] (c) -- (h) node[guard, midway, above] {\(\mathbf 1^\top c \ge N^+\)};
\draw[->] (h) -- (l) node[guard, midway, above] {\(\mathbf 1^\top c \ge N^+\)\\\(\theta^+ \ge \tau_{\textsc h}^+\)};
\draw[->] (h) to[bend left=35] node[guard, midway, above]
  {\(\mathbf 1^\top c < N^-\)\\\(\theta^- \ge \tau_{\textsc h}^-\)} (c);
\end{tikzpicture}%
}
\vspace{-2.5em}
\caption{\textbf{Mode Band and Guards for Reach-Avoid-Stay Grasping.} We plot five frames of one trial, one per mode of \cref{sec:modes}, above the transition system with the guards of \eqref{eq:guard} on each edge. Only two edges leave \textsc{hold}, the return to \textsc{close} through the hysteresis band, drawn curved, and the transition to the terminal \textsc{lift} mode.}
\label{fig:modelogic}
\end{figure*}

Given the guards, reset maps, and dynamics, the four domains \(D_p\) remain, and we now give the active object-constraint subset \(\Theta_p\) and the nominal command \(v^{\mathrm{nom}}_p\) of each mode. First, in \textsc{reach}, our controller treats the object as an obstacle for the entire hand at full margin, with \(v^{\mathrm{nom}}_{\textsc{r}} = v_{\mathrm{nom}}\) of \eqref{eq:nominal}. The CLF constraint set \(\mathcal R(\textsc{r})\) of \eqref{eq:qp} holds the single constraint \(V = d_G\), and by \cref{cor:safeclf} a reach-time stop appears as slack \(\sigma\) in the trial record.

Next, in \textsc{close}, the coarse finger-link constraints drop out of \(\Theta_{\textsc{c}}\), the palm constraints reduce to a smaller margin, and one clearance constraint per fingertip enters the active constraint set at zero margin. The nominal switches to the closure configuration \(\qgrasp{i^\star}\) of the softmax-dominant candidate, a one-element instance of \cref{def:field}. The stored closure is deliberately deep, every fingertip a full object radius inside the surface. Under each fingertip constraint, the tip stops at the surface while the neighboring fingers continue closing, whereas a closure posed tangent to the surface leaves only one finger touching. Each finger \(f\) also contributes its own CLF constraint, \(V_{\textsc{c},f}(q) = \tfrac12 \lVert q_f - \qgrasp{f,i^\star} \rVert_{\Lambda_f}^2\), with \(\Lambda_f\) the diagonal block of \(\Lambda\) on finger \(f\)'s coordinates. A fingertip clearance constraint that restricts progress toward the stored closure then appears as \(\sigma_f > 0\) in the trial record.

Then, in \textsc{hold}, the arm coordinates are held fixed and each finger satisfies a per-finger hold law,
\begin{equation}
v_{h,f} = \big(1 - c_f\big)\, K_{\textsc{h}} \big(\qgrasp{i^\star} - q\big)_f,
\label{eq:hold}
\end{equation}
which is the solution \eqref{eq:qp} returns for finger \(f\)'s coordinates whenever no fingertip clearance constraint for \(f\) is active, the finger constraint being tight at \eqref{eq:hold} under \(\gamma_r(s) = 2K_{\textsc{h}}\, s\). A touched finger holds its pose and a finger not yet in contact continues closing until its own constraint stops it. Regulating the full hand jointly instead pulls touched fingers off the object wherever the prescribed shape points past it, which induces repeated switching between \textsc{hold} and \textsc{close}. Once the system enters \textsc{hold}, the wrench-quality constraint \(h_{\mathrm{wq}}\) of \eqref{eq:qp} becomes active as a hard constraint alongside self-collision and workspace, fixing \(q_0\) at the moment of entry. By \cref{prop:qualityinv}, the realized margin then stays within \(k_{\mathrm{wq}}\) of its value at hold onset.

Finally, in \textsc{lift}, our controller holds the object fixed in the palm frame and the arm tracks a vertical task-space line through the same program \eqref{eq:qp}. The fingers hold their posture, \(h_{\mathrm{wq}}\) stays active to preserve the risk-adjusted margin through the lift, and we reverse the object's own clearance constraints so that they now enforce clearance between the object and the scene planes and obstacles.

In our simulation, the indicator \(c\) comes from a geometric proxy, the clearance between the fingertips and the object surface. A tactile array supplies the same indicator on hardware, and switching sources changes neither \eqref{eq:guard} nor the constraint sets. Self-collision and workspace constraints remain through every mode, a single quadratic program remains active throughout, and only its constraint set, its CLF target, and the activation of \(h_{\mathrm{wq}}\) change.

\begin{remark}[Hysteresis and Duration Limits]
\label{rem:hysteresis}
The band \(N^{-} < N^{+}\) prevents chatter at the contact boundary, and after a drop from \textsc{hold}, the controller repeats closure as an automatic second grasp attempt. We use the duration threshold \(\tau_{\textsc{h}}^{+}\) in place of a velocity-convergence test, which never triggers while fingers that cannot reach the object continue moving slowly toward the closure target. We use the exit threshold \(\tau_{\textsc{h}}^{-}\) in the opposite direction, holding the \textsc{hold} to \textsc{close} return closed across one contact-source change, so that a single incorrect tactile sample cannot trigger the release. Since \textsc{lift} is terminal, the return is the only cycle in \(\Gamma\), and the two duration limits exclude accumulation of transition times.
\end{remark}

\subsection{From the Field to Wrench Certificates}
\label{sec:bridge}

Our field measures distance in joint space, while the force-closure certificates measure robustness in wrench space. We connect the two so that reaching our target set certifies grasp quality, through four links, at admission, at first contact, during the hold, and at evaluation.

\begin{proposition}[Certified Target Proximity]
\label{prop:certified}
Let every grasp descriptor of \(\Gset\) satisfy \(\epsb_i \ge k_{\mathrm{adm}}\). Then for every \(q\) with \(d_G(q) \le \delta_0\), there exists a candidate \(i\) with
\(\lVert q - \qpre{i} \rVert_{\Lambda} \le \delta_0 + \log N / \rho\)
whose descriptor is risk-adjusted force closed with margin at least \(k_{\mathrm{adm}}\).
\end{proposition}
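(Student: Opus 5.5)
The plan is to read the proposition as a direct corollary of the left inequality of the sandwich bound \eqref{eq:sandwich} in \cref{prop:field}, combined with the admission filter applied at synthesis. No dynamics are involved. The statement concerns only the static geometry of the field and a property of the stored candidate set, so the proof should take only a few lines.

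\emph{Step one.} Fix \(q\) with \(d_G(q) \le \delta_0\). I first dispose of the degenerate case in which some \(d_i(q) = 0\), that is, \(q = \qpre{i}\) for some \(i\). There \cref{prop:field} is not stated, but the claim is immediate: that index \(i\) satisfies \(\lVert q - \qpre{i}\rVert_{\Lambda} = 0 \le \delta_0 + \log N/\rho\). Strictly this requires \(\delta_0 \ge -\log N/\rho\). That condition is automatic, because \(d_G \ge -\log N/\rho\) always holds, since the sum in \eqref{eq:field} is at most \(N\) once every \(d_i \ge 0\). In fact I would note that the lower bound of \eqref{eq:sandwich} holds for all \(q\), not only where every \(d_i > 0\). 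The reason is that the proof of \cref{prop:field} bounds the sum between \(e^{-\rho d_{\min}}\) and \(N e^{-\rho d_{\min}}\) using only \(d_i \ge d_{\min}\), so no smoothness is needed. This removes the case split altogether.

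\emph{Step two.} Apply \(d_{\min}(q) \le d_G(q) + \log N/\rho \le \delta_0 + \log N/\rho\). Let \(i \in \argmin_j d_j(q)\), which exists because \(\Gset\) is finite with \(N>0\). Then \(\lVert q - \qpre{i}\rVert_{\Lambda} = d_i(q) = d_{\min}(q) \le \delta_0 + \log N/\rho\), using the symmetry of the \(\Lambda\)-norm in \eqref{eq:hardfield}.

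\emph{Step three.} Transfer the certificate. By \cref{def:targetset}, each \(\qpre{i}\in\Gset\) is the pregrasp of a realized candidate of a stored descriptor. By hypothesis that descriptor has \(\epsb_i \ge k_{\mathrm{adm}}\), where \(\epsb_i\) is the descriptor margin \(\epsb_{\mathrm{desc}}\) of \cref{def:riskmargin}, evaluated at its synthesized contact set. This is precisely the statement that the descriptor is risk-adjusted force closed with margin at least \(k_{\mathrm{adm}}\). When \(k_{\mathrm{adm}} \ge 0\), and in particular strictly positive, \cref{def:riskmargin} further gives force closure with probability at least \(\beta\).

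The only real obstacle is interpretive rather than technical. The certificate is attached to the \emph{descriptor}, meaning its synthesized contacts at \(\qgrasp{i}\), while proximity is measured to the pregrasp \(\qpre{i}\). I would therefore state explicitly that the proposition certifies the nearby candidate's descriptor and makes no claim about the realized grasp at \(q\); that question is deferred to the later links of \cref{sec:bridge}, such as \cref{lem:intrinsic} and \cref{prop:qualityinv}. I would also make explicit the convention that the minimizing index is the one whose descriptor carries the certificate. This needs no extra argument, because every candidate is certified.
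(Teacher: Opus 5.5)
Your proposal is correct and follows the paper's proof: the left inequality of \eqref{eq:sandwich} gives \(d_{\min}(q) \le \delta_0 + \log N/\rho\), a minimizing index exists because \(\Gset\) is finite, and admission supplies that candidate's margin. Your observation that the lower sandwich bound holds for every \(q\), so the case \(d_i(q)=0\) needs no separate treatment, is a valid small tightening that the paper leaves implicit.
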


\begin{proof}
By \eqref{eq:sandwich}, \(d_{\min}(q) \le d_G(q) + \log N/\rho \le \delta_0 + \log N/\rho\), the minimum is attained by some \(i\), and admission guarantees its margin.
\end{proof}

Therefore, through \cref{prop:certified}, the guard threshold \(\delta_{\mathrm{pre}}\) of \eqref{eq:guard} bounds the true distance to a candidate whose descriptor holds the admission certificate. The hypothesis of \cref{prop:certified} is a property of the descriptor set alone, and the evaluation of \cref{sec:eval} gives \(\epsb \ge 0\) on 44 of the 50 stored grasps at the prior of \cref{sec:setup}. The bound therefore applies on the certified subset, and we report the six grasps outside it in \cref{sec:eval}.

Next, at first contact, the hold law \eqref{eq:hold} admits contact sets that deviate from the synthesized grasp, with per-finger variation in contact timing and penetration depth. The wrench matrix realized at hold onset therefore differs from the certificate's, the contact gap we measure in \cref{sec:eval}. \Cref{lem:intrinsic} characterizes the deviations that preserve closure. Every realization whose basis wrenches stay within \(\varepsilon\) of their modeled values remains force closed, and through \cref{lem:ordering} a stored min-weight margin implies such a tolerance without computing \(\varepsilon\) itself.

\begin{remark}[Scope of the Transfer]
\label{rem:transfer}
The tolerance of \cref{lem:intrinsic} is stated in the wrench domain, and it does not identify which joint-space perturbations respect the ball. Our controller therefore measures against the tolerance of \cref{lem:intrinsic} without imposing it as a constraint.
\end{remark}

Then, during the hold, we address the limitation of \cref{rem:transfer} through \cref{prop:qualityinv}. Our constraint \(h_{\mathrm{wq}}\) enforces \(\epsb(q) \ge \epsb(q_0) - k_{\mathrm{wq}}\) directly from hold onset, without assuming that the realized deviations stay within the tolerance of \cref{lem:intrinsic}. At the reported \(k_{\mathrm{wq}} = 0.02\), the constraint holds the executed grasp within \(k_{\mathrm{wq}}\) of its hold-onset margin. Admission thus bounds the hold-onset margin from below, and \cref{prop:qualityinv} limits its decay thereafter, jointly for as long as \eqref{eq:qp} remains feasible.

Finally, we evaluate the realized grasp, computing the risk-adjusted margin \(\epsb\) of \cref{def:riskmargin} on the contact set realized at hold onset and reporting the ratio \(r = \epsb_{\mathrm{exec}} / \epsb_{\mathrm{desc}}\) against the stored descriptor value, with the min-weight pair \(\lbar_{\mathrm{desc}}, \lbar_{\mathrm{exec}}\) reported as a baseline. The risk-adjusted margin certifies force closure with probability at least \(\beta\), the min-weight baseline only at a single nominal friction. A grasp the baseline certifies can still have \(\epsb < 0\) and lose closure at the risk-adjusted friction, and we measure this separation in \cref{sec:eval}.

\section{Experiments}
\label{sec:experiments}

We establish six claims in this section. Our controller reaches, grasps, and lifts across the full descriptor set while retaining most of the synthesized quality margin (\cref{sec:eval}). Our wrench-quality constraint eliminates grasps that the quality-neutral program lifts without a certificate, and a physics execution distinguishes the two outcomes. The same law moves through clutter that admits no unobstructed approach, and stops short of the object without contact when every route closes. Omitting the convergence constraints leaves the numbers of lifts unchanged, and their contribution is instead the recorded slack (\cref{sec:ablate}). Our controller handles a moving obstacle with no new constraints, and we measure the time-derivative term \eqref{eq:qp} does not include (\cref{sec:dynamic}). The same controller runs on a Unitree G1 humanoid without modification (\cref{sec:g1}). We give the platform and the parameters in \cref{sec:setup} and report the numerical checks on the field, gradient, and feasibility claims in \cref{sec:verification}.

\subsection{Robotic Platforms \& Task Setup}
\label{sec:setup}
We instantiate our controller on a fixed-base arm-hand system, a 7-degree-of-freedom arm with an 11-joint multifingered hand and tactile sensors at the fingertips, whose rigid-body model we build in Pinocchio from the URDF, with \(n_a = 7\) and \(n_h = 11\). The thumb contributes one dependent-joint pair at multiplier 1.83, and each outer finger contributes one pair at multiplier 0.89. Five such kinematically coupled pairs appear in \eqref{eq:qp} through \(A_m\), obtained from the rigid-body model at initialization.

In practice, exact mesh distance over the 134 self-collision pairs costs \(330\,\mathrm{ms}\) per query, and our controller therefore replaces every collision mesh with its convex hull at load time, bringing the query to \(0.1\,\mathrm{ms}\). The substitution is conservative, since a hull contains its mesh. The quadratic program solves in \(0.09\,\mathrm{ms}\) per step over 18 variables and roughly two hundred constraints, and the full step including all collision distances costs \(2\,\mathrm{ms}\) against the \(20\,\mathrm{ms}\) control step. \Cref{tab:params} collects every parameter.

In addition, an admission test eliminates every candidate whose pregrasp or closure violates a barrier constraint, obstacle constraints included, which implements the restriction of \cref{prop:certified}. In \cref{sec:discussion}, we measure the trapped equilibrium that results from disabling this test. Beyond the admission test, we evaluate the risk-adjusted margin \(\epsb\) of \cref{def:riskmargin} at confidence \(\beta = 0.9\) under a Gaussian friction prior \(\mu \sim \mathcal{N}(0.70, 0.10^2)\), whose risk-adjusted friction is \(v_\beta = \mathrm{CVaR}_{0.9}(\mu) = 0.524\). Synthesis and execution share \(v_\beta\). This prior follows the construction of \cite{enwerem2026firmgrasp} with no calibration to the 11-joint hand, and the probabilistic closure guarantee of \cref{def:riskmargin} therefore holds to the extent that the model applies.

\subsection{Certified Grasp Descriptor Evaluation}
\label{sec:eval}
Each descriptor passes the synthesis test of \cref{sec:prelim-wrench} before each trial, and we evaluate every executed grasp under \(\epsb\) at the prior of \cref{sec:setup}. In a reference trial, we place a sphere from the descriptor set behind a pair of \(6 \times 6 \times 50\,\mathrm{cm}\) rectangular obstacles, the obstacle-pair scene of \cref{fig:teaser}(a). No unobstructed approach to the object exists. Our filtered closed loop avoids the obstacle, with the palm up to \(20.5\,\mathrm{cm}\) from the nominal path, and completes the full \(12\,\mathrm{cm}\) rise. Both obstacle constraints stay positive at every step including the start pose, with \(\min_{j,t} h_j(q(t)) = 6.7\,\mathrm{mm}\). The hand realizes the descriptor's stored finger set, and the executed grasp retains \(r = 0.912\) of its certified margin, \(3.05\times10^{-3}\) against the descriptor's \(3.35\times10^{-3}\), with the min-weight baseline at \(0.446\) against \(0.468\).
\subsubsection{Descriptor-Set Outcomes} Our controller completes the full reach-grasp-lift sequence on 46 of the 50 objects, each behind a rectangular obstacle placed next to its approach path. \Cref{tab:eval} reports the result on every object, across four primitives, seventeen YCB household objects \cite{calli2015ycb}, and 29 adversarial EGAD meshes \cite{morrison2020egad}. We validate every scene against the arm's own collision hull at the start pose, and no trial begins with the obstacle already inside the obstacle margin. Our controller stops short of the lift on the remaining four trials, in each case at a guard of \eqref{eq:guard} the closed loop does not satisfy. The program \eqref{eq:qp} stays feasible at every step of every trial (\cref{sec:verification}), \cref{thm:safe} therefore holds the safe set forward invariant through the four incomplete trials as well, and the closed loop comes to rest inside the safe set on each of them, the outcome \cref{cor:safeclf} admits when the barrier constraints restrict the soft convergence constraints. In two trials, the hand does not attain the three contacts \eqref{eq:guard} requires within the 700-step horizon, and the trial ends in \textsc{close}. In the other two, our controller reaches \textsc{hold} and terminates at this state without satisfying the lift guard.

The executed grasps also retain their certified quality. Because the hand still realizes a contact set in the two \textsc{hold} failures, we evaluate 48 grasps in all, and 37 of them satisfy \(\epsb \ge 0\) at \(\beta = 0.9\). The min-weight baseline certifies 39 force closed over the same set, with 30 above the \(k_\ell = 0.3\) threshold of \cite{liFroggerFastRobust2023}. The two certificates differ on only two grasps, and both lie within \(4\times10^{-4}\) of zero on either margin.

However, we conduct the descriptor-set evaluation of \cref{tab:eval} without enforcing our wrench-quality constraint, and the \(\epsb\) columns of \cref{tab:eval} therefore evaluate our controller's grasps post hoc. The reference trial that opens \cref{sec:eval} enforces it instead, on the min-weight margin at a fixed zero threshold, and \(h_{\mathrm{wq}} \ge 0\) holds on every control step from the entry into \textsc{hold} through \textsc{lift}. \Cref{prop:qualityinv} states the hold-relative form of the quality guarantee for \(\epsb\), and the reference trial is its measured instance at a fixed threshold.

\subsubsection{Quality-Constrained Evaluation} Our wrench-quality constraint eliminates grasps the quality-neutral controller would lift without a certificate. A second evaluation over the full descriptor set enforces our hold-relative constraint (\cref{prop:qualityinv}) on \(\epsb\) at the prior of \cref{sec:setup}, with every other parameter fixed. Our constraint becomes active only at the entry into \textsc{hold}, and the controller is deterministic. Under both configurations, the controller reaches \textsc{hold} identically on every record.

Our controller completes the lift on 45 objects, one fewer than the quality-neutral configuration, and the four incomplete trials repeat identically, the same two ending in \textsc{close} and the same two in \textsc{hold}. The single trial that changes is a box primitive, where our controller switches the nominal command to the stored closure and the hand forms a three-finger contact set, whose realized margin falls from \(1.51\times10^{-3}\) at hold onset to \(-0.32\), against a descriptor certified at \(1.83\times10^{-3}\). The quality-neutral controller proceeds through the drop and lifts the grasp without a certificate. Our constraint becomes active with \(h_{\mathrm{wq}} = k_{\mathrm{wq}}\) at the entry into \textsc{hold}, the program turns infeasible 25 control steps later, and the trial stops in \textsc{hold}. The same descriptor at a second placement repeats the result, where the closure again realizes only three fingers, and \cref{fig:bslnvsgdf}(b) and \cref{fig:epsbtrace}(b) report the second-placement trial.

Beyond the kinematic evaluation, physics simulation distinguishes the two outcomes. We transfer each held configuration at the entry into \textsc{lift}, the hand-root pose together with the 11 hand joint values, into a Drake simulation with implicit proportional-derivative finger actuation and a fixed wrist, and we increase an upward force on the object to \(1.5\,mg\) over one second. At every commanded finger closure offset from \(0.02\) to \(0.04\,\mathrm{rad}\), the certified grasp of the completed tabletop trial holds the object through the full force increase with at most \(1.5\,\mathrm{mm}\) of lateral drift. Under the three-finger configuration without a certificate, the object falls out of the hand under gravity alone, before we apply the upward force.

In addition, our constraint holds the decay bound across the completed lifts. We evaluate \(\epsb\) at every control step of every recorded trial, validated per trial against the stored \(\epsb_{\mathrm{exec}}\) of \cref{sec:bridge}. At the end of all 45 lifts, the executed margin lies within the \(k_{\mathrm{wq}} = 0.02\) tolerance of its hold-onset value, and satisfies the bound at every evaluated control step on 44. The single exception falls to \(0.060\) below its hold-onset value on a contact-set change and recovers to \(0.006\) below by the end of the lift, against \(0.080\) under the quality-neutral configuration. The median end-of-lift ratio barely changes between the configurations, \(0.932\) quality-constrained against \(0.936\) quality-neutral over the lifts with a positive stored margin. Our constraint therefore leaves the median lift unchanged and binds only on the single exception, where the margin decay approaches the \(k_{\mathrm{wq}} = 0.02\) tolerance.

Finally, when we evaluate the stored descriptors themselves, 44 of 50 satisfy \(\epsb \ge 0\) and 46 of 50 single-friction force closure. The executed-margin gaps in \cref{tab:eval} therefore reflect the switch to the stored closure and the realized contact set on top of the six stored descriptors that do not themselves satisfy the risk-adjusted certificate. \cref{fig:evalscatter} plots the ratio \(r\) over the full descriptor set, \cref{fig:epsbtrace} shows the realized margin through the hold and the lift for the completed tabletop trial and the rejected box primitive, and \cref{fig:fieldbox} renders a single sphere trial in full.

\subsubsection{Obstacle-Dense Tabletop Scenes} Our controller passes through a gap with no unobstructed approach and, when clutter closes every route, stops short of the object without contacting an obstacle. We evaluate a scene pair under identical parameters, with the model joint-speed bounds scaled by 0.25, a step-size condition we return to below. Both scenes run with the wrench constraint \(h_{\mathrm{wq}}\) inactive.

In the first scene, we place the target object between the two obstacles, the trial the tabletop panels of \cref{fig:teaser}(a) depict. Our controller moves through the gap and completes the \(12\,\mathrm{cm}\) rise on the mode schedule \cref{fig:modelogic} plots. \(h_{\mathrm{obs}}\) stays positive throughout, reaching a minimum of \(+17.4\,\mathrm{mm}\) with \(3.2\,\mathrm{cm}\) of true surface clearance. For the realized grasp, \(\epsb_{\mathrm{exec}} = 3.80\times10^{-3}\) under the post-hoc evaluation of \cref{sec:bridge} and \(\lbar_{\mathrm{exec}} = 0.629\) under the baseline.

Next, in the second scene, we add a third obstacle on the segment from the hand to the target object. Our controller never enters \textsc{close} there, and the field value remains at \(0.524\) against the \(0.12\) guard threshold through 2400 control steps while the closed-loop trajectory stops at the added obstacle. We measure \(h_{\mathrm{obs}} < 0\) from step 68 on 2286 of those 2400 steps, with \(\min_t h_{\mathrm{obs}}(q(t)) = -13.0\,\mathrm{mm}\) against the \(15\,\mathrm{mm}\) margin and \(2.0\,\mathrm{mm}\) of true surface clearance remaining. The closed-loop trajectory comes to rest off the obstacle surface but inside the certified obstacle margin.

In the second scene at the full model speed bounds, the mean reach velocity reaches \(5.7\,\mathrm{rad/s}\), and the segment between two consecutive \(20\,\mathrm{ms}\) steps passes through an obstacle that the two sampled configurations clear, with the true penetration reaching \(4.5\,\mathrm{cm}\). Scaling the bounds by 0.25 eliminates the penetration and recovers the first-scene result. \Cref{thm:safe} therefore holds for the continuous-time closed loop, and the sampled implementation inherits it only under a step-size condition that our other reported trials satisfy.

More broadly, over the descriptor set, the arm holds every obstacle and self-collision constraint clear of the true surface in all 44 completed trials, and enters the obstacle margin in seven, at most by \(6.3\,\mathrm{mm}\) of the \(15\,\mathrm{mm}\) obstacle margin with \(8.7\,\mathrm{mm}\) of surface clearance still remaining. Sampling recovers margin more slowly than \eqref{eq:hbound} does in continuous time, and we report the resulting violation as the cost of the discretization. We exclude the fingertip constraints by construction, since the mode switch of \cref{sec:modes} drives them negative to admit contact.

\begin{table}[t]
\centering
\caption{Certified-Descriptor Evaluation by Object Class. Shading marks the adversarial class.}
\label{tab:eval}
\footnotesize
\setlength{\tabcolsep}{3pt}
\begin{tabular}{@{}lrrrrr@{}}
\toprule
Class & Objects & Lift & \(\epsb \ge 0\) & Force closed & \(\lbar \ge 0.3\)\\
\midrule
Primitives & 4 & 4 & 3 & 3 & 2\\
YCB household & 17 & 14 & 12 & 12 & 10\\
\rowcolor{oursmint}
EGAD adversarial & 29 & 28 & 22 & 24 & 18\\
\midrule
\rowcolor{rowtot}
All & 50 & 46 & 37 & 39 & 30\\
\bottomrule
\end{tabular}
\end{table}

\begin{figure}[t]
\centering
\includegraphics[width=.8\columnwidth]{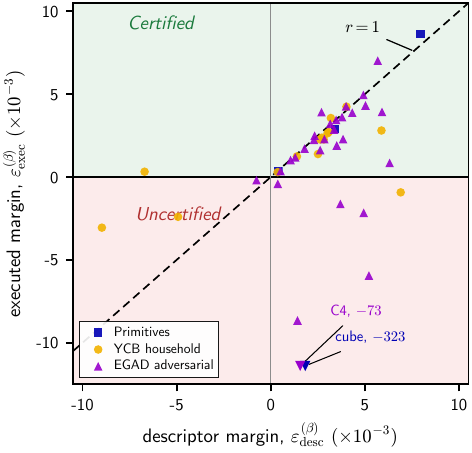}
\caption{\textbf{The Quality Ratio Across the Descriptor Set.} We plot the executed risk-adjusted margin against the stored descriptor margin for every evaluated grasp, colored by class as in \cref{tab:eval}, with the diagonal \(r = 1\). Two grasps fall below the axis range, and we draw them at the bottom edge with their executed margins \(\epsb_{\mathrm{exec}}\) labeled next to them.}
\label{fig:evalscatter}
\end{figure}

\begin{figure*}[t]
\centering
\includegraphics[width=\textwidth]{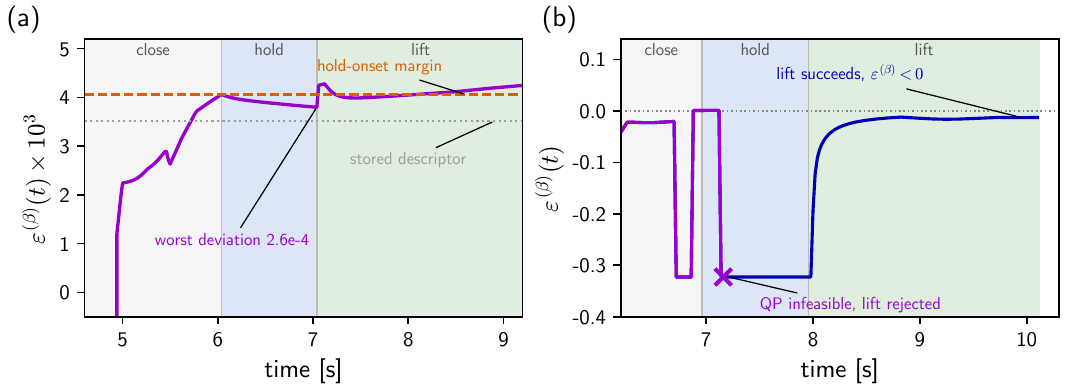}
\caption{\textbf{Per-Step Margins from the Offline Evaluation.} We plot the realized risk-adjusted margin per control step for two executions, with the mode bands shaded. (a) On the cluttered tabletop scene, the margin enters \textsc{hold} at \(4.06\times10^{-3}\) and never falls below \(3.80\times10^{-3}\) at every control step through the lift, a worst deviation of \(2.6\times10^{-4}\), 0.013 of the \(k_{\mathrm{wq}} = 0.02\) tolerance of \cref{prop:qualityinv}. (b) On an object behind the column pair of \cref{fig:bslnvsgdf}(b), the realized margin falls during the hold on a contact change. The quality-neutral program proceeds through the drop and completes the lift at \(\epsb = -0.013\), below the closure certificate, while with our constraint enforced, the program turns infeasible at the drop, the transition to \textsc{lift} never occurs, and the trial stops in \textsc{hold}.}
\label{fig:epsbtrace}
\end{figure*}

\subsection{Omitting the Convergence Constraints}
\label{sec:ablate}

Omitting the convergence constraints of \eqref{eq:qp} and re-running the full descriptor set leaves the lift numbers unchanged, 46 either way, and \cref{tab:ablate} reports both configurations. The certified counts move from 37 to 38, and the mean executed \(\epsb\) from \(-6.7\times10^{-3}\) to \(1.2\times10^{-3}\) over the 48 evaluated trials. Three trials change status between the configurations, one completing only with the constraints present, one only with them omitted, and one reaching \textsc{hold} under both configurations but falling short of the lift only without them. On the descriptor set, the convergence constraints neither add lifts nor reduce them beyond the spread a single contact difference produces.

By contrast, the convergence constraints contribute the recorded slack. We read \(\sigma > 0\) from the field constraint on all 46 trials, with a mean of \(1.23\) over \textsc{reach}, and the per-finger slack variables average \(0.79\) over \textsc{close}. The barrier constraints are thus active against the nominal on every trial in the descriptor set. When a trial stops, the slack variables show which constraint failed to converge and by how much, where a controller without them would stop with no such record, and \cref{cor:safeclf} confirms that this slack never relaxes a barrier constraint, leaving it a measure of impeded progress alone.

\begin{table}[t]
\centering
\caption{Comparison with the Convergence Constraints Omitted}
\label{tab:ablate}
\footnotesize
\setlength{\tabcolsep}{1.0pt}
\renewcommand{\arraystretch}{1.05}
\begin{tabular}{@{}lrrrrrr@{}}
\toprule
Setting & Lift & \(\epsb \ge 0\) & Mean \(\epsb\) & \(\lbar \ge 0\) & \(\lbar \ge 0.3\) & Mean \(\lbar\)\\
\midrule
\rowcolor{oursmint}
Full \eqref{eq:qp} & 46 & 37 & \cellcolor{losssalmon}-0.007 & 39 & 30 & 0.161\\
Constraints omitted & 46 & 38 & 0.001 & 41 & 29 & 0.234\\
\bottomrule
\end{tabular}
\end{table}

\begin{figure*}[t]
\centering
\includegraphics[width=\textwidth]{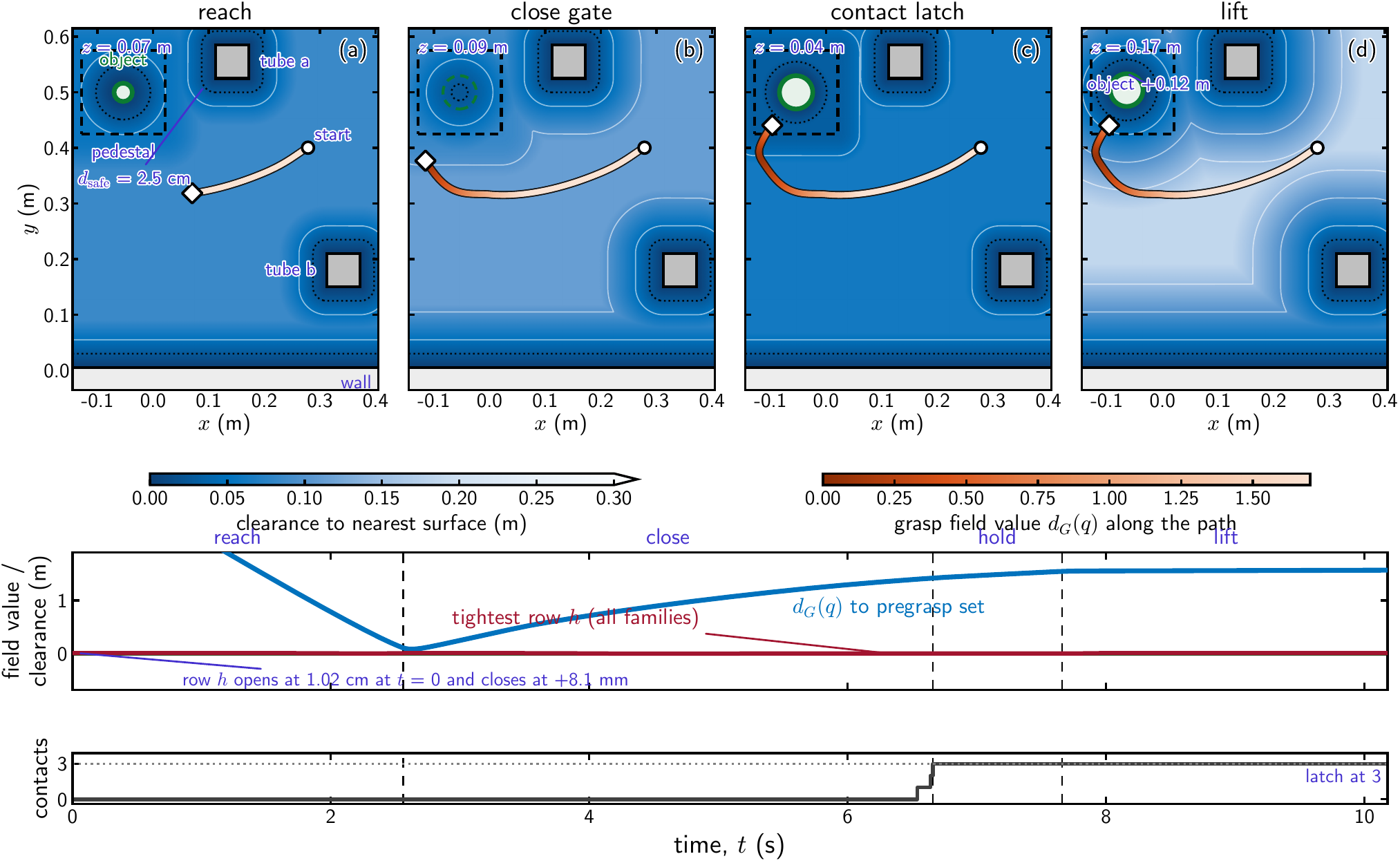}
\caption{\textbf{Field Execution Through Contact and Lift.} We plot the clearance field that the obstacle, wall, and object constraints induce over the workspace at the four mode transitions of a two-obstacle trial, with the executed palm path colored by our grasp field value \(d_G\). The band below plots the per-step series, the field decrease through \textsc{reach}, the rise after the switch to the stored closure through \textsc{close}, the number of contacts with its return to \textsc{close}, and the minimum barrier constraint together with the obstacle constraints, nonnegative throughout apart from the single home-posture hull-overlap pair that the filter drives positive per \cref{thm:safe}.}
\label{fig:fieldbox}
\end{figure*}

\subsection{Dynamic Obstacles}
\label{sec:dynamic}

Our controller updates the obstacle poses at every step and therefore handles a moving obstacle without modification. We place a \(5 \times 5 \times 45\,\mathrm{cm}\) obstacle at the end of the arm's approach path and translate it at \(0.05\,\mathrm{m/s}\) across this path during the arm's approach motion. Our filtered closed loop avoids the moving obstacle and completes, entering \textsc{close} at step 159 against 41 without the obstacle, and completing the \(12\,\mathrm{cm}\) rise at step 428. The filtered palm path lies up to \(19.1\,\mathrm{cm}\) from the nominal path. The filtered closed loop thus drives the arm to evade the moving obstacle and continue its approach motion as the obstacle crosses the approach path. Surface clearance stays positive throughout, with a minimum of \(+8.4\,\mathrm{mm}\).

For the same moving-obstacle case, \eqref{eq:qp} omits the \(\partial h/\partial t\) term, and \(h_{\mathrm{obs}}\) therefore falls below zero by at most the approach speed divided by the barrier rate, \(v/\alpha_0 = 10\,\mathrm{mm}\) here. We measure a minimum of \(-6.6\,\mathrm{mm}\), inside the \(1.5\,\mathrm{cm}\) obstacle margin. However, one placement we test reproduces the trapped equilibrium of \cref{sec:discussion}. The obstacle reaches the approach line before the palm, the filtered closed loop stops behind it at zero velocity, and stays there after the obstacle comes to rest.

\subsection{Humanoid Instantiation}\label{sec:g1}%
Our controller runs on a humanoid without modification, which tests that the platform-specific structure comes from the rigid-body model alone. We merge the 29-degree-of-freedom Unitree G1 body with the same 11-joint hand at the right wrist in Pinocchio, and we fix the legs and left arm, leaving a 21-dimensional fixed-pelvis chain of three torso and seven arm joints plus the hand. We define the tabletop half-space in the pelvis frame and load a grasp record synthesized for the humanoid through the construction of \cref{def:targetset}.

Specifically, we place a \(4 \times 4 \times 36\,\mathrm{cm}\) obstacle on the line from the start palm to the object. Our controller completes the \(12\,\mathrm{cm}\) rise in 434 steps with no infeasible step. The minimum value of the obstacle constraint is \(3.8\,\mathrm{mm}\) with \(3.4\,\mathrm{cm}\) of true surface clearance, the filtered palm path lies up to \(9.7\,\mathrm{cm}\) from the nominal path, and the torso rotation stays under \(18^\circ\) through the lift.

However, the same contact gap we quantify in \cref{sec:eval} appears at hold onset. The guard is satisfied with three fingers in contact before the thumb arrives, and both the risk-adjusted margin and the min-weight baseline place the executed three-contact set below closure, against a descriptor whose four-contact closure has \(\epsb = 4.31\times10^{-3}\) and a min-weight margin of \(0.48\).

\subsection{Verification}
\label{sec:verification}

We verify the field, gradient, and feasibility claims on the full rigid-body model, over five \firmgrasp{} descriptors at smoothing parameter \(\rho = 25\) and 200 sampled configurations. The analytic gradient \eqref{eq:grad} matches central finite differences to a maximum error of \(7.7 \times 10^{-10}\), which is the finite-difference noise floor at a step of \(10^{-6}\). The gradient bound of \cref{prop:field} holds at a maximum \(\Lambda^{-1}\)-norm of \(1 + 2.2\times10^{-16}\), unity to one unit in the last place.

Similarly, the bound \eqref{eq:sandwich} holds on both sides, with no violation on the lower side and an upper gap of at most \(3.12 \times 10^{-2}\) against the ceiling \(\log 5/\rho = 6.44 \times 10^{-2}\). The 200 sampled configurations therefore exercise the bound where it is nearly tight, at the Voronoi boundaries between distinct descriptors.

Next, we run a free-space execution over the same five candidates through every mode. The controller enters \textsc{close} at step 52, \textsc{hold} at step 184, and \textsc{lift} at step 234, and completes the \(12.1\,\mathrm{cm}\) rise in 337 steps, with the gap of \eqref{eq:sandwich} at most \(1.93 \times 10^{-2}\) along the trajectory. The recorded per-step softmax weights concentrate on candidate 4, the nearest candidate in joint space from the start posture, and remain there through closure and the hold regardless of which candidate has the largest margin. The trial is our first empirical instance of the no-selection claim, with the executed candidate coming from the field's own weights and no separate selection step. For the realized four-finger grasp, \(\epsb = +2.27 \times 10^{-3}\) at a min-weight margin of \(+0.346\).

Finally, the program \eqref{eq:qp} stays feasible at every step of every reported trial, and the dependent-joint equality residual reaches at most \(1.4 \times 10^{-15}\). Every CBF family holds nonnegative after the initial steps, apart from a single home-posture self-collision pair that the hull substitution starts at \(-1.65\,\mathrm{cm}\) while the underlying meshes remain clear. The pair recovers to \(+1.4\,\mathrm{mm}\) by the end of the trials, as \eqref{eq:hbound} predicts.

\begin{table}[t]
\centering
\caption{Controller Parameters for the Verification Trials}
\footnotesize
\setlength{\tabcolsep}{1.4pt}
\label{tab:params}
\begin{tabular}{@{}llr@{}}
\toprule
Symbol & Role & Value\\
\midrule
\(\rho\) & softmin smoothing parameter, \eqref{eq:field} & 25\\
\(k\) & nominal gain, \eqref{eq:nominal} & 2.0\\
\(\Lambda\) & arm / hand metric weights & 1.0 / 0.35\\
\(\alpha_0\) & barrier rate, \eqref{eq:qp} & 5.0\\
\(\eta\) & CLF slack weight, \eqref{eq:qp} & \(10^{3}\)\\
\(K_{\textsc{h}}\) & hold gain, \eqref{eq:hold} & 0.6\\
\(k_{\mathrm{adm}}\) & \(\epsb\) admission threshold at synthesis & 0\\
\(k_{\mathrm{wq}}\) & wrench-constraint decay tolerance & 0.02\\
& safety margin, all clearance constraints & 2.5\,cm\\
& palm margin in \textsc{close} & 0.5\,cm\\
& fingertip radius & 0.3\,cm\\
& contact threshold on tip clearance & 0.6\,cm\\
\(\delta_{\mathrm{pre}}\) & \textsc{reach} to \textsc{close} guard & 0.12\\
\(N^{+}/N^{-}\) & contact limits, hold / release & 3 / 2\\
& obstacle margin & 1.5\,cm\\
\(\tau_{\textsc{h}}^{+}\) & hold duration to enter \textsc{lift} & 1\,s\\
\(\tau_{\textsc{h}}^{-}\) & exit duration to return to \textsc{close} & 60\,ms\\
& lift speed / height & 5\,cm/s / 12\,cm\\
& control step & 20\,ms\\
\bottomrule
\end{tabular}
\end{table}

\section{Discussion}
\label{sec:discussion}

The closed-loop trajectory of our CBF quadratic program is a collision-free curve in \(\Cfg\), ending at contact admission where a conventional plan-then-track system completes its planned reach and begins closure. We add our wrench-quality CBF at hold onset, and \cref{prop:qualityinv} bounds the margin's decay. Barrier constraints tuned for slip and roll-off \cite{shawcortez2019cbf}, or risk-sensitive constraints over latent uncertainty \cite{enweremRiskConstrainedBeliefSpaceOptimization2026a}, enter \eqref{eq:qp} through the same hard-constraint block and leave \cref{thm:safe} unchanged.

\subsection{Trapped Equilibria of the Filtered Closed Loop}
As with artificial potential fields, nonconvex constraints can induce undesired equilibria \cite{khatib1986realtime}. Such an equilibrium occurs when \eqref{eq:qp} removes every descent direction and the projected gradient has no tangential component (\cref{fig:teaser}(c)). Multiple candidates enlarge the attraction region, and admission removes candidates blocked at their endpoints, but neither mechanism establishes global reachability. Without admission, the five-candidate experiment stalls above the guard threshold; with admission, it removes the blocked candidate and reaches \textsc{close}. A placement sweep fails for palm-obstacle overlap above \(7\,\mathrm{cm}\) and succeeds at \(3\,\mathrm{cm}\). Tangential nominal terms or perception resets can mitigate this failure without changing \cref{thm:safe} \cite{singletary2021comparative}. 

\subsection{Limitations}
Our field depends on candidates synthesized for the current object pose, and a moving object requires the candidate generator to update \(\Gset\) at every perception step. On hardware a pose estimate replaces ground truth, where measurement-robust constraints incorporate bounded state error into the CBF margin at a known cost in conservatism \cite{cosner2021mrcbf}. The verification reported here is kinematic, the executed grasps hold a quasi-static closure certificate while the lift moves an attached object, and we do not test on hardware. The tactile contact source of \cref{sec:modes} defines the interface such trials would use, filters of the same form already operate at every control step on dynamic legged platforms \cite{molnar2022modelfree,bena2025geometry,yang2025cbfrl}, and the open question under dynamics concerns the wrench certificate alone.

\section{Conclusion}
\label{sec:conclusion}

We formulated GDF feedback for trajectory-free multifingered grasp execution. The softmin field interpolates candidate selection; the CBF-QP establishes forward invariance; hysteretic transitions admit contact; and the wrench-quality barrier bounds margin loss after hold onset. The 18-DoF implementation requires 2 ms per 20 ms interval and transfers directly to the Unitree G1 model.

\bibliographystyle{IEEEtran}
\hbadness=3500
\bibliography{refs}

\end{document}